\pgfplotsset{compat=1.14}
\theoremstyle{plain}
\newtheorem{thm}{Theorem}
\newtheorem{lem}{Lemma}
\newtheorem{corollary}{Corollary}
\theoremstyle{definition}
\newcommand{\note}[1]{\marginpar{\tiny *note in TeX*}}
\newcommand{\expec}[1]{\mathbb{E}\left[#1\right]}
\newcommand{\normal}{\mathcal{N}(0,1)}
\newcommand{\gauss}{\mathcal{N}}
\newcommand{\reals}{\mathbb{R}}
\newcommand{\I}{\mathbf{I}}
\newcommand{\frob}[1]{\left\| {#1} \right\|_\text{F}}
\newcommand{\twonorm}[1]{\left\| {#1} \right\|_2}
\newcommand{\sign}[1]{\operatorname{sign}\left(#1\right)}
\newcommand{\diag}{\operatorname{diag}}
\newcommand{\vecc}{\operatorname{vec}}
\newcommand{\ReLU}{\operatorname{ReLU}}
\newcommand{\distop}[2]{\mathrm{dist}\left(#1,#2\right)}
\DeclareMathOperator*{\argmin}{argmin}
\newcommand{\ordereps}[1]{\mathcal{O}_{\epsilon}\left({#1}\right)}
\newcommand{\rbrak}[1]{\left(#1\right)}
\newcommand{\cbrak}[1]{\left\{#1\right\}}
\newcommand{\frsq}[2]{\frac{#1}{\sqrt{#2}}}
\newcommand{\eps}{\epsilon}
\newcommand*{\MinNumber}{0}%
\newcommand*{\MaxNumber}{1}
\newcommand{\ApplyGradient}[1]{%
	\pgfmathsetmacro{\PercentColor}{100.0*(#1-\MinNumber)/(\MaxNumber-\MinNumber)}
	\hspace{-0.33em}\colorbox{red!\PercentColor!black}{}
}
\newcolumntype{R}{>{\collectcell\ApplyGradient}c<{\endcollectcell}}
\title{Learning ReLU Networks via Alternating Minimization}
\author{ Gauri Jagatap and Chinmay Hegde\thanks{
		The authors are with the Department of Electrical and Computer Engineering,
		Iowa State University, 
		Ames, IA, USA 50011. Email: \{gauri,chinmay\}@iastate.edu.  }}
\date{}
\begin{document}

\maketitle
\begin{abstract}
  We propose and analyze a new family of algorithms for training neural networks with ReLU activations. Our algorithms are based on the technique of alternating minimization: estimating the activation patterns of each ReLU for all given samples, interleaved with weight updates via a least-squares step. The main focus of our paper are 1-hidden layer networks with $k$ hidden neurons and ReLU activation. We show that under standard distributional assumptions on the $d-$dimensional input data, our algorithm provably recovers the true ``ground truth'' parameters in a linearly convergent fashion. This holds as long as the weights are sufficiently well initialized; furthermore, our method requires only $n=\widetilde{O}(dk^2)$ samples. We also analyze the special case of 1-hidden layer networks with skipped connections, commonly used in ResNet-type architectures, and propose a novel initialization strategy for the same. For ReLU based ResNet type networks, we provide the first linear convergence guarantee with an end-to-end algorithm.  We also extend this framework to deeper networks and empirically demonstrate its convergence to a global minimum. 
\end{abstract}

\section{Introduction}\label{sec:intro}

\paragraph{Motivation}

Deep neural networks have found success in a wide range of machine learning applications. However, despite significant empirical success, a rigorous algorithmic understanding of \emph{training} such networks remains far less well understood. 

Our focus in this paper are on a class of neural networks with rectified linear units (ReLUs) as activation functions. The method of choice to train such networks is the popular (stochastic) gradient descent. ReLU networks are computationally less expensive to train when compared to networks with tanh or sigmoid activations since they generally involve simpler gradient update steps. Due to their utility as well as amenability to analysis, several recent papers have addressed the problem of \emph{provably} showing that gradient descent for ReLU networks succeeds under various assumptions~\cite{tian2017symmetry,zhong,li,geleema18}  
\vspace{-0.3cm}
\paragraph{Our contributions}

In this paper, we depart from the standard approach of gradient descent (GD) for learning ReLU-based neural networks. Instead, we propose a new approach based on the technique of \emph{alternating minimization}. In contrast with gradient-based learning, our algorithm is \emph{parameter-free}: it does not involve any tuning parameters --- such as learning rate, damping factor, dropout ratio, etc --- other than setting the number of training epochs. To our knowledge, the alternating minimization framework presented in our paper is novel.

Additionally, we supplement our learning algorithm with a rigorous analysis. In particular, we prove that under a generative modeling assumption where there is a ``ground truth'' (or teacher) network and if the data samples $x$ are distributed according to a multivariate Gaussian, our algorithm exhibits linear convergence provided there are sufficient number of samples. This means that the parameter estimation error reduces to $\varepsilon$ after $O(\log 1/\varepsilon)$ training epochs. For $d$-dimensional inputs, our approach requires {\color{black}$O(dk^2 \text{poly}(\log d))$} for 1-hidden layer networks. 

For 1-hidden layer networks, our analysis works for both dense architectures~\cite{zhong} as well as (a simple modification of) residual networks (ResNets) with skipped connections~\cite{li}\footnote{ResNets conventionally skip two layers of weights before identity mapping of the previous weights; in this paper we skip only one layer.}. We remark that our rate of convergence matches that of the algorithm provided in~\cite{zhong} for dense networks, and improves upon the standard SGD for ResNets, as analyzed in~\cite{li}. However, in contrast with~\cite{zhong}, this paper offers the following improvements: our algorithm is parameter-free and our analysis is simpler. Moreover, empirically, we are able to show that alternating minimization exhibits better success rates, as compared to standard gradient descent, particularly as the network becomes more complex. 

Additionally, we provide an improved analysis of convergence for training a two-layer ResNet type network through our algorithmic framework. We demonstrate that by picking an \textit{identity} initialization, one can prove convergence to global minimum through alternating minimization. Note that this initialization is way more computationally efficient as compared to adopting the tensor decomposition based method in \cite{zhong}. In general, proving convergence of our alternating minimization-based learning algorithm requires a suitable initialization, which we elaborate on in further detail for both standard and residual architectures. To the best of our knowledge, this is the first such theoretical result for provably training (shallow) ResNets with finitely many samples.

Finally, we provide a range of numerical experiments that support our theoretical analysis. Our experiments show that our proposed algorithm provides comparable, and in some cases improved numerical performance than those provided in~\cite{zhong}, which utilizes gradient descent for training their networks. Meanwhile, we show improved numerical performance for ResNet-type networks by utilizing our initialization strategy.  While our theory is only for 1-hidden layer networks, we also extend the algorithm to deeper ReLU networks, and show that the algorithm works well empirically (i.e., it gives zero training loss for sufficiently many samples). Establishing rigorous guarantees for ReLU networks with depth $\geq 2$ is an interesting direction for future work.

Overall, our work can be viewed as a first step towards a new algorithmic approach (beyond traditional gradient approaches) for training ReLU networks.
 \vspace{-0.3cm}
\paragraph{Techniques}
At a high level, our algorithm is based upon a simple (but key) idea that we call the ``linearization'' trick. For a {given} sample $x$ and a given estimate of the network parameter $W$, define the \emph{state} of the network as the collection of binary variables that indicate whether a given ReLU is active or not. (In the literature, these have been referred to as \emph{signatures}~\cite{laurent2017multilinear}.) Since ReLU networks simulate piecewise linear functions, if we \emph{fix} the state, then the the mapping from $x$ to the label $y$ can be approximated using a \emph{linear} neural network.

We can repeat this local linearization step for all samples. Overall, we obtain a system of linear equations with the weights of the network as the variables, which we can solve using either black-box direct methods (such as LU or Cholesky decompositions) or iterative solvers (such as conjugate gradients) depending on the size of the problem. Once we obtain an improved estimate of the weights, we can update the the state of the network for each sample, and repeat the above procedure by alternating between updating the state variables as well as the weights until convergence. 

Two main conceptual challenges arise here. The first is to rigorously prove that our linearization trick actually converges to a global optimum in the absence of noise. (In fact, proving convergence even to a local optimum has not been shown before, to our knowledge.) To show this, we appeal to recent algorithmic advances for solving \emph{phase retrieval} problems~\cite{netrapalli,CandesL12,CandesLS13,copram,copramext,zhang2016reshaped}. In particular, in the case of Gaussian-distributed data we adapt the proof of ~\cite{netrapalli} to prove linear convergence of the weights to an $\epsilon$-ball around the ground truth teacher network parameters. 

Our above alternating minimization succeeds (in theory) for arbitrary networks. However, our analysis is local in nature, and only succeeds provided the network weights are initialized suitably close to the ground truth. The second conceptual challenge is to actually produce such an initialization. We discuss this in the context of several common architectural assumptions. For dense networks, we can leverage the tensor-based initialization of~\cite{zhong}, while for residual networks, we propose a simple \textit{identity} initialization that can be directly derived by inspecting the network architecture. In both cases, we perform several numerical experiments to demonstrate the benefits of our proposed methods.

\begin{table*}[!t]
	\centering
	\caption{{\sl 
		 Comparison of algorithms and analysis for 1-hidden layer networks. Here, $d,k,n$ denote dimensions of the data and hidden layer, number of samples, respectively. $\ordereps{\cdot}$ hides polylogarithmic dependence on $\frac{1}{\epsilon}$. Alternating Minimization and (Stochastic) Gradient descent are denoted as AM and (S)GD respectively.}} \label{tab:compare}
	\resizebox{0.9\textwidth}{!}{
		\begin{tabular}{|p{1cm}|p{3.5cm}|p{3.3cm}|p{1.9cm}|p{2.6cm}|p{1.7cm}|} 
			\hline
			
			Alg. & Paper & Sample complexity & Convergence rate & Architecture & Parameters\\
			\hline 
			GD &  \cite{zhong} & $\ordereps{d k^2 \cdot\text{poly}(\log d)}$ & $\ordereps{\log \frac{1}{\eps}}$ & Dense, under-parametrized & step-size $\eta$\\
			\hline 
			SGD & \cite{wang2018learning} & $\ordereps{k^2\twonorm{w^*}^2}$ & $\ordereps{\frac{1}{\epsilon}}$ & Dense, ReLU,  under-parametrized 
			& step-size $\eta$\\
			\hline 
			SGD &  \cite{li2018learning} & $\ordereps{\text{poly}(k)}$ & $\ordereps{\frac{1}{\epsilon}}$ & Dense, ReLU, over-parametrized & step-size $\eta$\\
			\hline 
			SGD &  \cite{li} & $\times$ (population loss) & $\ordereps{\frac{1}{\epsilon}}$ & Skipped/ResNet, ReLU & step-size $\eta$\\
			\hline 
			GD &\cite{bartlett2018gradient} & $\times$ (population loss) & $\ordereps{\log \frac{1}{\eps}}$ & Skipped/ResNet, linear & step-size $\eta$\\	
			\hline
			\hline
			{AM} & (this paper) & $\ordereps{d k^2\cdot \text{poly}(\log d)}$ &$\ordereps{\log \frac{1}{\eps}}$ &  Dense, under-parametrized, ReLU  & none \\
			\hline 
			{AM} & (this paper) & $\ordereps{d k^2\cdot \text{poly}(\log d)}$ & $\ordereps{\log \frac{1}{\eps}}$ &  Skipped/ResNet, ReLU  & none\\
			\hline
	\end{tabular}}
\end{table*}
\vspace{-0.3cm}

\paragraph{Comparison with prior work}

Owing to the tremendous success of deep learning approaches in practice~\cite{lecun2015deep,krizhevsky2012imagenet}, numerous theoretical contributions have emerged that study various aspects of deep learning algorithms (such as expressive power, accuracy of learning from an optimization perspective, and generalization). Due to space constraints, our overview of related work will be necessarily incomplete; see Section 2 of~\cite{zhong} for a more comprehensive overview.

Our focus in this paper is on designing provably accurate algorithms for optimizing the squared-error training loss for ReLU networks. Several recent papers have studied this from a theoretical perspective. The works of~\cite{tian2017symmetry,soltanolkotabi2017learning} have proved global convergence of gradient descent for a 0-hidden layer network with a ReLU activation, given a sufficient number of samples. 
The case of 1-hidden layer ReLU networks is more challenging, and it is now known that the landscape of the squared-error loss function is rife with local minima; therefore, algorithmic aspects such as initialization and local/global convergence need to be carefully handled. For random initializations, the recent papers~\cite{tian2017symmetry,brutkus-globerson} provides a symmetry-breaking convergence analysis for 2-layer ReLU networks where the weight vectors of the hidden layer possesses disjoint supports. In contrast, our algorithm succeeds without any such assumptions on the weights of the network.

 In \cite{taylor2016training}, the authors explore a similar deviation from gradient descent, for training neural networks, via a combination of alternating directions method of moments (ADMM) and Bregman iterations. The optimization procedure is fairly involved; the weights, activations, and linear inputs prior to activation, each is updated, in an alternating manner, over all layers. The basis of our paper is much simpler; we only estimate weights and non-linearities arising from the ReLU activation.

On the other hand, the approach in~\cite{geleema18} constructs a special non-convex loss function which does not suffer from local minima, and whose minima correspond to the ground truth parameters of the ReLU networks. However, in contrast our metric of success is directly measured in terms of the Euclidean error between the estimated weights and the ground truth parameters.

In adjacent literature, for 1-hidden layer networks, \cite{wang2018learning} proves convergence of modified SGD to global minimum for the problem of classification of linearly separable data by introducing noisy perturbations to the inputs of ReLU neurons. Similarly, \cite{li2018learning} shows that SGD can successfully learn a two-layer over-parametrized network in the setting of multi-class classification for well-separated distributions. 

The work perhaps most closely related to our approach is that of \cite{zhong}, who provide recovery guarantees for 1-hidden layer networks with a variety of activation functions, including ReLU's. Like us, they provide a two-stage scheme: a suitable initialization, followed by gradient descent. (Such a two-stage approach has been proposed for solving several other non-convex machine learning problems~\cite{netrapalli,arora2015simple}). In contrast, our method is based on alternating minimization, and  requires fewer  parameters (and in practice, does not require fine-tuning of factors such as learning rate). 

In addition, in contrast with~\cite{zhong}, our algorithm (and analysis) for 1-hidden layer networks extend to residual networks with skip connections. Residual networks have shown to give empirical advantage over standard networks and are empirically easier to optimize over~\cite{resnets}. 
An added benefit of optimizing with skipped connections is that constitutes a simpler optimization landscape with fewer local minima~\cite{goldstein}. We leverage this fact, and show that even with a random initial estimate, one is able to constrain the set of possible solutions $\mathbf{I}+ W$ to a smaller subset of the space of optimization variables, hence enabling global convergence guarantees \footnote{While preparing this paper we became aware of the work in \cite{bartlett2018gradient}, that also utilizes an identity initialization to train deep residual networks. However, their analysis only holds for \emph{linear} neural networks trained with gradient descent.}.

%
Finally, we remark that a similar alternating minimization framework has been utilized to study the problem of phase retrieval~\cite{netrapalli,phasecut,copram}, with near-optimal computational and statistical guarantees. Indeed, our algorithm for learning 0-hidden layer networks is a direct extension of that of~\cite{netrapalli}. Our contribution in this paper is to show that the same alternating minimization framework generalizes to networks with 1 or more hidden layers as well, and may pave the way for further algorithmic connections between the two problems.

A comparative description of the performance of our algorithms is presented in Table \ref{tab:compare}.

\paragraph{Paper organization}

The rest of this paper is organized as follows. Section~\ref{sec:background} establishes notation and mathematical basics. Section~\ref{sec:Algorithm} introduces our alternating minimization framework for a 1-hidden layer network, establishes convergence bounds, and includes a sketch of the analysis techniques. Section~\ref{sec:experiments} supports our analysis via several numerical experiments. 
\section{Mathematical model}
\label{sec:background}

\paragraph{Notation}
\label{sec:notation}

Some of the notation used in the entirety of this paper are enlisted below. Scalars and vectors are denoted by small case letters, and matrices are denoted by upper case letters, with elements of both indexed by subscripts. The indicator vector $p$ stores the missing sign information from prior to the ReLU operation. The matrix $\mathbb{P} = \diag(p)$ represents the matrix with the elements of $p$ along its diagonal and zero elsewhere. The symbol `$\circ$' denotes the element-wise Hadamard product. Vectors and matrices with superscript `*' denote the ground truth. Vectorization (or flattening) of a matrix $M$ is represented as $\vecc(M)$, producing a long vector with columns of $M$ stacked one beneath another. Small constants are represented by $\delta$ and large constants by $C$. The non-linear operation of ReLU is denoted by $\sigma(\cdot)$. The indicator function is denoted as $\mathbbm{1}_{\{{Xw^*} \geq 0\}} := \frac{\sign{Xw^*}+\mathbf{1}}{2}$, where each entry: 
\[
\mathbbm{1}_{{\{{Xw^*} \geq 0\}}_i} = \begin{cases}
1, &\quad {x_i^\top w^*} \geq 0\\
0, & \quad {x_i^\top w^*} < 0
\end{cases},
\] 
for all $i\in\{1\dots n\}$. Boldface $\mathbf{1}$ represents a vector of ones. Similarly, $\mathbf{I}$ is the identity matrix.

In each of the ReLU network architectures considered below, we assume that the training data consists of i.i.d. samples $\{(x_1, y_1), \ldots, (x_n, y_n)\}$, where each input $x_i \in \reals^d$ is constructed by sampling independently from a zero-mean $d-$variate Gaussian distribution with variance $1/n$, and the output $y_i$ obeys the generative model:
\hspace{-0.1cm}
$$ y_i = f^* (x_i),$$
\hspace{-0.1cm}
where $f^*$ varies depending on the architecture. We denote the data matrix $X \in \reals^{n\times d}$, with each row representing a $d$-dimensional sample point, with $n$ such samples. Scalar labels corresponding to each sample point are contained in $y \in \reals^n$. If skipped connections are considered, we denote the network weights by $W^*$ and effective weights of the forward model as $\hat{W}^* = W^* + \I$.
 

We devise algorithms that recover the ``ground truth'' parameters of the ReLU network. In the case of neural networks with 1 or more hidden layers, such a recovery can only be performed modulo some permutation of the weight vectors. Therefore, to measure the quality of the algorithm, we define a notion of distance between two weight matrices: 
\[
\distop{W}{W'} := \frob{W - W'_{{\pi}}} = \min_{\pi' \in \Pi} \frob{W - W'_{{\pi'}}},
\] 
{\color{black}where $\pi'$ is any permutation of the column indices of the weight matrix, such that $W_{\pi'} = \overline{W}$ (column permuted version of $W$) and $\Pi$ is a set of all possible permutations of the column indices of $W$}. For notational convenience, we assume without loss of generality that $\pi$ is the identity permutation, in which case we simply can replace the $\distop{\cdot}{\cdot}$ with the more familiar Frobenius norm. 

\paragraph{Multi-layer ReLU network} 
A ``teacher" ReLU network with $L$ layers and scalar output can be characterized as follows:
\begin{align}
f^*(X) := \sigma(\sigma(\dots \sigma(XW^{1*})W^{2*} )\dots)W^{L*}), \label{eq:fwd2} 
\end{align}
where $X \in \mathbb{R}^{n\times d_1}$, and the weights of the teacher network at the $l^{th}$ layer are $W^{l*} \in \mathbb{R}^{d_l\times d_{l+1}}$ with $l\in \{1,\dots L-1\}$ and $W^{L*} = \mathbf{1}_{d_L \times 1}$ \footnote{Since the ReLU is a homogenous function, we can assume that the last layer of weights is the all-ones vector without loss of generality.}.

In this paper we specifically study the 1-hidden layer formulation of this problem, with $d_1=d$ and $d_2 = k$ neurons in the hidden layer, as a forward model:
\begin{align}
f^*(X) = \sum_{q=1}^{k} \ReLU(Xw_q^{*}) := \sigma(XW^{*}) \mathbf{1}_{k\times 1}, \label{eq:fwd1hidden}
\end{align}
where we have the weight matrix corresponding to the first layer $W_1^{*} = W^* := [w^{*}_1 \dots w^{*}_k]\in \reals^{d \times k}$. 
We can \textit{linearize} this forward model by introducing an new vector $p_q^* = \mathbbm{1}_{\{{Xw_q^*} > 0\}} := \frac{\sign{Xw_q^*}+\mathbf{1}}{2}, \forall q$ (and matrix counterpart $\mathbb{P}_q := \diag(p_q)$) which stores the sign information of the inputs to each of the hidden neurons. We can rewrite this forward model as a linear function of the weights $W^*$:
\begin{align*}
f^*(X) &= [\diag(p_1^*)X \dots \diag(p_k^*)X]_{n \times dk} \cdot \vecc(W^{*})_{dk \times 1}, \\
&:= [\mathbb{P}_1^*X \dots \mathbb{P}_k^*X]_{n \times dk} \cdot \vecc(W^{*}),\\
&:= [A_1^* \dots A_k^*]\cdot \vecc(W^{*}) := B^* \cdot \vecc(W^{*}).
\end{align*}
We utilize the alternating minimization framework to estimate $\mathbb{P}_q$'s and $\vecc(W)$.

A special case of this formulation is the single neuron case, for which the forward model is simply $y = \ReLU(Xw^*) = p^*\circ \rbrak{X w^{*}} := A^* \cdot w^{*}$, where $w^* \in \reals^{d \times 1}$ are the neuron weights to be recovered.

The learning problem essentially comprises of recovering the underlying mapping $f$, from outputs $y$, such that $f(X)$ estimates  $y = f^*(X)$. This can be formulated as a minimization of the following form,
\begin{align} \label{eq:lossfn}
\min_{W_1,\dots W_L} \mathscr{L}(W_1,\dots, W_L) = \min_{W_1,\dots W_L} \twonorm{f(X) - y}^2
\end{align}
where $\mathscr{L}(W_1,\dots, W_L)$ is the $\ell_2$-squared loss function.

%


\section{Algorithm and Analysis}
\label{sec:Algorithm}

We now propose our alternating minimization-based framework to learn shallow networks with ReLU activations using a $\ell_2$ loss function. At a high level, our algorithm rests on the following idea: given the knowledge of the correct signs of the input to each ReLU, the forward models depicted above can be \emph{linearized}, i.e., they now represent linear neural networks. Therefore, the weights can be estimated as in the case of any other linear model (e.g., via least-squares).

This immediately motivates an iterative two-phase approach: alternate between (i) a sign estimation (linearization) step, followed by (ii) an update to the estimate of the weights via least-squares estimation. The first phase can be computationally achieved by a single forward pass through the network for each sample, while the second phase can be computationally achieved by any standard (stable) solver for linear systems of equations. These steps are iterated to convergence and we elaborate on how many iterations are required below in our analysis.

The pseudo-code for training a 1-hidden layer network using our above framework is provided in the form of Algorithm \ref{algo:hidden1} respectively. The pseudocode for training 2-hidden layer networks is presented in the Appendix \ref{sec:appendixC} as Algorithm \ref{algo:hidden2}.




\renewcommand{\algorithmicrequire}{\textbf{Input:}}
\renewcommand{\algorithmicensure}{\textbf{Output:}}

\begin{algorithm}[!t]
	\caption{Training 1-hidden layer ReLU network via Alternating Minimization}
	\label{algo:hidden1}
	\begin{algorithmic}[1]
		\Require $X,y,T, k$
		\item[]
		\State \textbf{Initialize} $W^0\:\:s.t.\:\: \distop{W^0}{W^*} \leq \delta_1 \frob{W^*}.$
		\item[]	
		\For{$t = 0,\cdots,T-1$}
		\State ${p}^{t}_q \leftarrow \mathbbm{1}_{\{{Xw^t_q} \geq 0\}}$, \quad $\forall q \in \{1\dots k\}$, 		\item [] 
		\State $B^t \gets [\diag(p^{t}_1)X \dots \diag(p^{t}_k)X]_{n \times dk} $,
		\State	$\vecc{(W)}^{t+1} \gets$  $\arg\min\limits_{\vecc(W)} \twonorm{B^t\cdot \vecc(W) - y}^2$,
		\State $W^{t+1} \gets $ reshape$(\vecc{(W)^{t+1}},[d,k])$.
		\EndFor
		\item[]
		\Ensure $W^{T} \leftarrow W^{t}$.
	\end{algorithmic}
\end{algorithm}
\vspace{-8pt}

\subsection{Algorithmic guarantees} 
\label{sec:proof}

We now provide a theoretical analysis of our proposed algorithms. Our first theorem proves that alternating minimization for training a 1-hidden layer ReLU network exhibits linear convergence to the weights of the teacher network.


\begin{restatable}{thm}{convergencetwo}\label{thm:convergence2}
	Given an initialization $W^0$ satisfying $\distop{W^0}{W^*} \leq \delta_1 \frob{W^*}$, for $0 < \delta_1 < 1$, if we have number of training samples $n > C \cdot d\cdot k^2 \cdot \log k $, then with high probability {($1-e^{-\gamma n}$)} the iterates of Algorithm~\ref{algo:hidden1} satisfy:
	\begin{align} \label{eq:mainconvergence2}
	\distop{W^{t+1}}{W^*} \leq {\rho_0} \distop{W^t}{W^*} .
	\end{align}
	Here, {$\gamma$} is a positive constant and  $0 < \rho_0 < 1$ .
\end{restatable}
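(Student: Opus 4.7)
The plan is to adapt the alternating minimization proof of \citet{netrapalli} for phase retrieval to the ReLU / multi-neuron setting. The key observation is that once we condition on a fixed collection of signatures $\{p_q\}_{q=1}^k$, the forward model becomes linear in $\vecc(W)$ with design matrix $B = [\mathbb{P}_1 X,\dots,\mathbb{P}_k X]$, so the least-squares step in Algorithm~\ref{algo:hidden1} admits a closed form. Writing $y = B^* \vecc(W^*)$ and $\vecc(W^{t+1}) = (B^{t\top} B^t)^{-1} B^{t\top} y$, the error decomposes cleanly as
\begin{equation*}
\vecc(W^{t+1} - W^*) \;=\; (B^{t\top} B^t)^{-1} B^{t\top} (B^* - B^t)\,\vecc(W^*).
\end{equation*}
To prove Theorem~\ref{thm:convergence2} it therefore suffices to upper bound the operator norm of $(B^{t\top} B^t)^{-1} B^{t\top} (B^* - B^t)$ by a contraction factor $\rho_0 < 1$.

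The first task is to lower bound $\sigma_{\min}(B^t)$. For a single neuron ($k=1$), $B^t = \mathbb{P}^t X$ is a Gaussian row-subsampled matrix whose minimum singular value is $\Theta(\sqrt{n/d})$ once $n = \widetilde{\Omega}(d)$, by standard sub-Gaussian matrix concentration. With $k$ neurons, the off-diagonal blocks $(\mathbb{P}^t_q X)^\top (\mathbb{P}^t_{q'} X)$ are controlled by the overlap of the two indicator patterns and produce a Gram matrix that is well-conditioned once $n = \widetilde{\Omega}(dk^2)$; the extra $k^2$ factor accounts for the coupling between neurons and the net-argument (covering of signatures reachable from any $W$ in a $\delta_1$-ball around $W^*$) that is needed to make the bound uniform. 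The second task is to bound the numerator $B^{t\top}(B^* - B^t)\vecc(W^*)$. The $q$-th block of $(B^* - B^t)\vecc(W^*)$ equals $(\mathbb{P}^*_q - \mathbb{P}^t_q) X w^*_q$, which is supported only on samples where $\mathrm{sign}(x_i^\top w^*_q) \neq \mathrm{sign}(x_i^\top w^t_q)$. Classical Gaussian anti-concentration shows that (i) the fraction of such mismatched samples scales as the angle $\angle(w^*_q, w^t_q) = O(\twonorm{w^*_q - w^t_q}/\twonorm{w^*_q})$, and (ii) on those samples $|x_i^\top w^*_q|$ is itself small, yielding an overall bound of the form $\twonorm{(\mathbb{P}^*_q - \mathbb{P}^t_q) X w^*_q} \lesssim \twonorm{w^*_q - w^t_q}\cdot\sqrt{n}$ up to logarithmic factors, provided $n = \widetilde{\Omega}(d)$.

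Combining the two estimates gives
\begin{equation*}
\frob{W^{t+1} - W^*} \;\leq\; \frac{1}{\sigma_{\min}(B^t)^2}\,\twonorm{B^t}\,\sum_{q=1}^k \twonorm{(\mathbb{P}^*_q - \mathbb{P}^t_q) X w^*_q} \;\leq\; \rho_0 \,\distop{W^t}{W^*},
\end{equation*}
for a constant $\rho_0 < 1$ as long as the initialization radius $\delta_1$ is sufficiently small (so that the fraction of sign-flips per neuron stays below the threshold that yields contraction) and the sample budget satisfies $n > C\,dk^2\log k$. An induction on $t$ then delivers \eqref{eq:mainconvergence2}, with the high-probability event controlling the Gaussian concentration statements uniformly over all iterations; by adjusting constants one obtains the failure probability $e^{-\gamma n}$.

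The main obstacle is the uniformity argument. The signatures $p^t_q$ are data-dependent, so one cannot simply fix $B^t$ and invoke concentration. Handling this requires showing that for every $W$ in a $\delta_1$-ball around $W^*$, the induced $B(W)$ satisfies both the singular-value lower bound and the sign-mismatch upper bound simultaneously. A standard epsilon-net on the $dk$-dimensional weight space plus a union bound costs a polylogarithmic factor in $d$ and is the source of the $\widetilde{O}(dk^2)$ sample complexity; making this union bound compatible with the per-neuron anti-concentration estimates (while keeping $\rho_0 < 1$) is the delicate technical core of the proof.
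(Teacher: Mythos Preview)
Your proposal is correct and follows essentially the same route as the paper: the identical error decomposition $\vecc(W^{t+1}-W^*) = (B^{t\top} B^t)^{-1}B^{t\top}(B^*-B^t)\vecc(W^*)$, the same three ingredients (upper bound on $\sigma_{\max}(B^t)$, lower bound on $\sigma_{\min}(B^t)$, and the per-neuron sign-mismatch bound $\twonorm{(\mathbb{P}^*_q-\mathbb{P}^t_q)Xw^*_q}\lesssim\twonorm{w^t_q-w^*_q}$), and the same contraction inequality you display. The only difference is in sourcing: where you sketch the $\sigma_{\min}(B^t)$ lower bound via a block-Gram plus $\epsilon$-net argument and the sign-error via direct anti-concentration, the paper instead invokes the Hessian lower bound of \citet{zhong} (their Theorem~4.2) and Lemma~3 of \citet{zhang2016reshaped} as black boxes, so the uniformity issue you flag as the ``delicate technical core'' is absorbed into those citations rather than proved from scratch.
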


\begin{proof}
	
	Let $B^t := [\mathbb{P}_1^tX, \ldots, \mathbb{P}_k^tX]_{\pi} := [A_1, \ldots, A_k] = B$, where  $\mathbb{P}^t_q = \text{diag}(p^t_q)$ and subscript $\pi$ represents a specific permutation of the indices $\{1, \ldots, k\}$. For the sake of this proof, we can assume that $\pi$ is the identity permutation, without loss of generality.
	
	Since the minimization in Line 5 of Algorithm \eqref{algo:hidden1} can be solved exactly, we get:
	\begin{align*}
	\vecc(W)^{t+1} &= (B^\top B)^{-1} B^\top y \\
	&= (B^\top B)^{-1} B^\top B^* \vecc(W^*)\\
	&= (B^\top B)^{-1} B^\top B \vecc(W^*) + \\&\hspace{2cm}(B^\top B)^{-1} B^\top (B^* - B) \vecc(W^*).
	\end{align*}
	Taking the difference between the learned weights and the weights of the teacher network,
	\begin{align*}
	\vecc(W)^{t+1} - \vecc(W^*) &= (B^\top B)^{-1} B^\top (B^* - B) \vecc(W^*). 
	\end{align*}
	Taking the vector $\ell_2$ norm on both sides,
	\begin{align} \nonumber
	&\hspace{-0.9cm}\frob{W^{t+1} - W^{*}}\\ \hspace{0.4cm} &= \twonorm{ (B^\top B)^{-1} B^\top (B^* - B^t)\vecc(W^*)}, \\\nonumber
	&\leq \twonorm{(B^\top B)^{-1}}\twonorm{B^\top}	\twonorm{(B^* - B^t)Xw^*},	 \\\nonumber
	&\leq \frac{1}{\sigma_{min}^2(B)} \cdot \sigma_{max}(B) \cdot  \sum_{q=1}^k\twonorm{E_{{sgn}_q}},\\ &\leq \frac{1}{\sigma_{min}^2(B)} \cdot \frac{\sqrt{k}}{1+\sqrt{C}} \cdot \rho_3 \frob{W^t-W^*} \label{eq:phase_rho},
	\end{align}
	where, $\sigma_{min}(B),\sigma_{max}(B)$ are the minimum and maximum singular values of $B$, respectively; $\twonorm{B^\top}$ is bounded via Corollary \ref{cor:Abound} in Appendix \ref{sec:appendixB} and the error $\sum_q^k \twonorm{E_{{sgn}_q}} =  \sum_q^k \twonorm{(\mathbb{P}^*_q - \mathbb{P}^t_q)Xw^*_q}$, is bounded as follows, 
	\begin{align*}
	E_{{sgn}_{q}} &= (A_q-A_q^*)w_q^* = (\mathbb{P}_q^t-\mathbb{P}_q^*)Xw_q^* \\
	\implies \twonorm{E_{{sgn}_q}}^2 &= \sum_{i=1}^{m} (x_i^\top w_q^*)^2 \cdot \mathbf{1}_{\cbrak{(x_i^\top w_q^*)(x_i^\top w_q^t)<0}} \\
	&\leq \rho_{2,q}^2 \twonorm{w_q^t - w_q^*}^2
	\\ \implies \sum_{q=1}^k\twonorm{E_{{sgn}_q}}^2 &\leq \sum_{q=1}^k \rho_{2,q}^2 \twonorm{w_q^t - w_q^*}\\
	&\leq \max_q{\rbrak{\rho_{2,q}^2}} \sum_{q=1}^k \twonorm{w_q^t - w_q^*}^2
	\\  &= \max_q{\rbrak{\rho_{2,q}^2}} \frob{W_{\pi}^t - W_{\pi}^*}^2
	\end{align*}
	where the final bound is obtained via Corollary \ref{cor:phase_error} in Appendix \ref{sec:appendixB}, which holds with probability greater than $1-\eta$, where $\eta$ is small constant close to $0$ as long as $n > C\cdot d \cdot k^2 \cdot \log k$. Subsequently,
	\begin{align*}
	\rbrak{\sum_{q=1}^k \twonorm{E_{{sgn}_q}}}^2 &\leq  k \sum_{q=1}^k \twonorm{E_{{sgn}_q}}^2\\
	&\leq k\max_q{\rbrak{\rho_{2,q}^2}} \frob{W_{\pi}^t - W_{\pi}^*}^2\\ 
	\implies \sum_{q=1}^k \twonorm{E_{{sgn}_q}} &\leq \sqrt{k}\max_q{\rbrak{\rho_{2,q}}}  \frob{W_{\pi}^t - W_{\pi}^*}\\ &= \rho_{3} \frob{W_{\pi}^t - W_{\pi}^*},
	\end{align*}
	To evaluate the final desired bound in \eqref{eq:phase_rho}, we have, we can ensure that $\frac{\sqrt{k}(1+\frac{1}{\sqrt{C}})}{\sigma_{min}^2(B)} \leq \delta$, such that, $\rho_0 = \delta\cdot \rho_3 < 1$, the following convergence is established:
	\begin{align*}
	\frob{W^{t+1}-W^*} &\leq \delta \cdot \rho_{3} \frob{W^{t}-W^*} \\
	&:= \rho_{0}\frob{W^{t}-W^*}
	\end{align*}
	where $\rho_0<1$. This condition is established via Lemma \ref{lemma:sigmamin} in Appendix \ref{sec:appendixB}.
\end{proof}

\subsection{Techniques for initialization}

To prove convergence of Algorithm \ref{algo:hidden1}, we require that the initial weights $W^0$ are such that they meet the constraints $\distop{W^0}{W^*} \leq \delta_1\frob{W^*}$ for (small enough) constant $\delta_1$.

\subsubsection*{{Dense 1-hidden layer ReLU networks:}}

For the 1-hidden layer case, one can opt to use the tensor initialization method proposed by Zhong et. al. in \cite{zhong}. For completeness, we provide a description of this method in the Appendix \ref{sec:appendixC}.  



\subsubsection*{{1-hidden layer with skipped connections:}}

For residual network with skipped connections, we consider a modification of Equation \ref{eq:fwd1hidden}, of the form:
 \begin{align}
 f(X) &= \sum_{q=1}^{k} \sigma(X(w_q^{*}+e^S_q)) \nonumber \\
 &= \sigma(X(W^{*} + \mathbf{I}_S))W^{2*} := \sigma(X(\hat{W}^{*}))W^{2*}, \label{eq:skipped}
 \end{align}
where $W^{*} := [w^{*}_1 \dots w^{*}_k]\in \reals^{d \times k}$, effective weights of the forward mapping $\hat{W^*} = {W^*} + \mathbf{I}_S$, $W^{2*} = \mathbf{1}_{k\times 1}$, and $\mathbf{I}_S \in \reals^{d \times k}$ is a sub-matrix of $\mathbf{I}_{d\times d}$, with $k \leq d$ out of $d$ columns (known locations) picked, and $e^S_q$ are the columns of $\mathbf{I}_S$. Furthermore, a common assumption in the literature is that $\frob{W^*} \leq \gamma$~\cite{li,hardtma,bartlett2018gradient}. For our algorithm, we require the assumption that $\distop{\hat{W}^0}{\hat{W}^*}   \leq \delta_1 \|{\hat{W}^*}\|_{F}$. 
Now, suppose we initialize $\hat{W}^0 \gets \mathbf{I}_S$, where support $S$ is known. Then,
\begin{align}
\distop{\hat{W}^0}{\hat{W}^*}  &=  \distop{\mathbf{I}_S}{{W}^* + \I_S} \nonumber \\
&=  \|{W}^* + \mathbf{I}_S - \mathbf{I}_S\|_{F} = \|W^*\|_F. \label{eq:skipped_init}
\end{align}
If the underlying weights corresponding to the true mapping are such that $\frob{W^*} \leq \delta_1 \|W^* + \I_S\|_F$, then the requirement for convergence of Algorithm \ref{algo:hidden1} is met. That is, if there exists some $\gamma$, such that $$\frob{W^*} \leq \gamma \leq  \frac{\delta_1\sqrt{k}}{1-\delta_1},$$ then the output of the alternating minimization algorithm $W^T$ will converge to $W^*$, via Theorem \ref{thm:convergence2}. \qed


\subsection{Sample complexity}

Through our algorithms, we are able to establish recovery guarantees, under certain sample complexity constraints. For single hidden layer model, we obtain a sample complexity requirement of $O(d k^2 \log k)$ for global convergence, as long as a certain (refer Theorem \ref{thm:convergence2}) weights initialization condition is met. This is comparable to the results in Theorem D.2. of \cite{zhong}, where the authors derive that $O(d k^2 \text{poly}(\log d))$ samples are required to ensure linear local convergence of vanilla gradient descent to learn the weights of a 1-hidden layer network. The sample complexity bottleneck lies in the initialization stage. As a special case, for a single neuron model, our bound equates to requiring $O(d)$ samples for successfully learning the true single neuron mapping $w^*$, using alternating minimization. This matches the results for learning the weights of a single neuron via gradient descent \cite{soltanolkotabi2017learning}.

This is also the first paper that comments on the sample complexity requirement to successfully learn the weights of (one block) of a 1-hidden layer network with skipped connections, as long as the network is initialized to an {identity} mapping (refer Eqn. \ref{eq:skipped_init}). In this case, the sample complexity requirement is exactly $O(d k^2 \log k)$, as long as $\frob{W^*} < \gamma$.

\begin{figure}[!t] 
	\centering
	\input{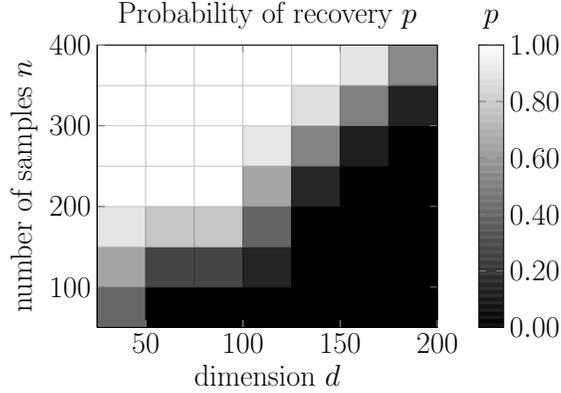}
\caption{\emph {Phase transition of single neuron model with ReLU activation, for a $d$-dimensional dataset with $n$ samples, and probability of successful recovery $p$.}} \label{fig:phase_trans0}
\end{figure}

\section{Experiments}
\label{sec:experiments}


\paragraph{Single ReLU neuron:}

We select a data matrix $X \in \reals^{n \times d}$ with entries picked from a normal ($\gauss(0,1/n)$) distribution. We construct a (single) ReLU with weights $w^* \in \reals^d$ picked from a Gaussian distribution $\normal$. Our goal is to recover $w^*$, using our proposed approach. In our experiments, we set $w_0 \gets \mathbf{0}$, but we have seen that random initialization with weights $O(1/\sqrt{d})$ (which is typically prescribed in practice) yields identical phase transitions. 

The dimension of the input $d$ is swept in steps of $25$ from $d=50$ to $d=200$, and the number of training samples $n$ is swept from $n=50$ to $n=500$ in steps of 50. The training process was repeated for 100 trials, with each trial corresponding to a different set of random data samples $X$. Recovery is said to be \emph{successful} if the learned weights after $T$ iterations $w^T$, satisfy $\twonorm{w^T-w^*}/\twonorm{w^*} < 0.01$. The probability of recovery is calculated as the ratio of number of trials in which \textit{recovery is successful} and the \textit{total} number of trials. We present the results in terms of a phase transition diagram (Figure~\ref{fig:phase_trans0}). As can be seen from the figure, there appears to be a linear relationship between the dimension $d$ and the number of samples required by alternating minimization for accurate parameter recovery, which is predicted by our theoretical analysis. We also point out that the phase transition is identical to that generated by training this neuron via gradient descent (not shown in this paper); however, a key benefit of our algorithm is that no tuning parameters, such as step size $\eta$, are required to train the network.

\paragraph{One-hidden layer ReLU network:}


\begin{figure}[!t]
	\centering
	\resizebox{\textwidth}{!}{
		\begin{tabular}{cc}
			\begin{tikzpicture}[scale=0.99]
\tikzstyle{every node}=[font=\Large]
\begin{axis}
[width=0.5\textwidth,
xlabel= Number of samples $n$, 
ylabel= Probability of recovery,
grid style = dashed,
grid=both,
legend style=
{at={(1.02,0)}, 
anchor=south west, 
} ,
x label style={at={(axis description cs:0.5,-0.15)},anchor=north},
y label style={at={(axis description cs:-0.12,.5)},anchor=south}
]

\addplot[color=purple, solid,line width=4pt, mark size=2pt, mark=square*] plot coordinates {
(50,0)
(70,0)
(90,0.46)
(100,0.82)
(130,0.92)
(150,0.98)
(170,1)
(190,1)
(210,1)
(230,1)
(250,1)
}; 

\addplot[color=blue, mark size=2pt,line width=4pt, mark=square*] plot coordinates {
(50,0)
(70,0)
(90,0)
(100,0.12)
(130,0.58)
(150,0.76)
(170,0.94)
(190,0.94)
(210,1)
(230,1)
(250,1)
}; 


\addplot[color=red, mark size=4pt,line width=3pt, mark=diamond*] plot coordinates {
(50,0)
(70,0)
(90,0)
(100,0)
(130,0.12)
(150,0.54)
(170,0.86)
(190,0.96)
(210,1)
(230,1)
(250,1)
};

\addplot[color=black, mark size=4pt,line width=3pt, mark=diamond*] plot coordinates {
	(50,0)
	(70,0)
	(90,0)
	(100,0)
	(130,0)
	(150,0.06)
	(170,0.5)
	(190,0.82)
	(210,0.98)
	(230,1)
	(250,1)
};

\addplot[color=magenta,dashed, mark size=3pt,line width=3pt, mark=*] plot coordinates {
(50,0)
(70,0)
(90,0.28)
(100,0.92)
(130,0.94)
(150,0.98)
(170,1)
(190,1)
(210,1)
(230,1)
(250,1)
};

\addplot[color=cyan, dashed,line width=3pt, mark size=3pt, mark=*] plot coordinates {
(50,0)
(70,0)
(90,0)
(100,0)
(130,0.44)
(150,0.86)
(170,0.98)
(190,0.94)
(210,1)
(230,1)
(250,1)
};

\addplot[color=orange, densely dotted, mark size=3pt,line width=3pt, mark=*] plot coordinates {
(50,0)
(70,0)
(90,0)
(110,0)
(130,0)
(150,0.24)
(170,0.82)
(190,0.96)
(210,1)
(230,1)
(250,1)
};

\addplot[color=gray, densely dotted, mark size=3pt,line width=3pt, mark=*] plot coordinates {
	(50,0)
	(70,0)
	(90,0)
	(110,0)
	(130,0)
	(150,0)
	(170,0)
	(190,0.14)
	(210,0.66)
	(230,0.98)
	(250,1)
};

\legend{k=3(AM),k=4(AM), k=5(AM), k=6(AM), k=3(GD), k=4(GD), k=5(GD), k=6(GD)}

\end{axis}
\end{tikzpicture}
&
	
			\begin{tikzpicture}[scale=0.99]
\tikzstyle{every node}=[font=\Large]
\begin{axis}
[width=0.5\textwidth,
xlabel= Number of samples $n$, 
ylabel= Probability of recovery,
grid style = dashed,
grid=both,
legend style=
{at={(1.02,0)}, 
anchor=south west, 
} ,
x label style={at={(axis description cs:0.5,-0.15)},anchor=north},
y label style={at={(axis description cs:-0.12,.5)},anchor=south}
]

\addplot[color=red, mark size=2pt,line width=4pt, mark=square*] plot coordinates {
(100,0.06)
(200,0.58)
(300,0.80)
(400,0.96)
(500,0.94)
(600,0.98)
(700,0.98)
(800,1)
(900,0.98)
(1000,1)
};


\addplot[color=blue,  mark size=2pt,line width=3pt, mark=*] plot coordinates {
(100,0.0)
(200,0.44)
(300,0.82)
(400,1)
(500,0.98)
(600,1)
(700,1)
(800,1)
(900,1)
(1000,1)
};

\addplot[color=darkgray, mark size=2pt,line width=3pt, mark=square*] plot coordinates {
	(100,0.0)
	(200,0.2)
	(300,0.62)
	(400,0.80)
	(500,0.86)
	(600,0.90)
	(700,0.94)
	(800,0.98)
	(900,1)
	(1000,0.98)
};

\addplot[color=orange,dashed, mark size=3pt,line width=4pt, mark=*] plot coordinates {
	(100,0.02)
	(200,0.30)
	(300,0.68)
	(400,0.94)
	(500,0.92)
	(600,0.94)
	(700,0.98)
	(800,0.98)
	(900,1)
	(1000,1)
};

\addplot[color=cyan, dashed, mark size=3pt,line width=3pt, mark=*] plot coordinates {
	(100,0.0)
	(200,0.28)
	(300,0.74)
	(400,0.86)
	(500,0.94)
	(600,1)
	(700,1)
	(800,1)
	(900,0.98)
	(1000,1)
};

\addplot[color=gray, dashed, mark size=3pt,line width=3pt, mark=*] plot coordinates {
	(100,0)
	(200,0.02)
	(300,0.30)
	(400,0.58)
	(500,0.86)
	(600,0.90)
	(700,0.98)
	(800,1)
	(900,1)
	(1000,1)
};

\legend{ k=3(AM),k=4(AM),k=5(AM),k=3(GD),k=4(GD),k=5(GD)}

\end{axis}
\end{tikzpicture}\\
		\Large	(a) & \Large (b)
		\end{tabular} 
	}
	\caption{\emph {Phase transition for 1-hidden layer network, $d=20$, via Alternating Minimization (AM, solid) and Gradient Descent (GD, dashed) for (a) random initialization and (b) good initialization. \label{fig:phase_trans1_2}}}
	
\end{figure}
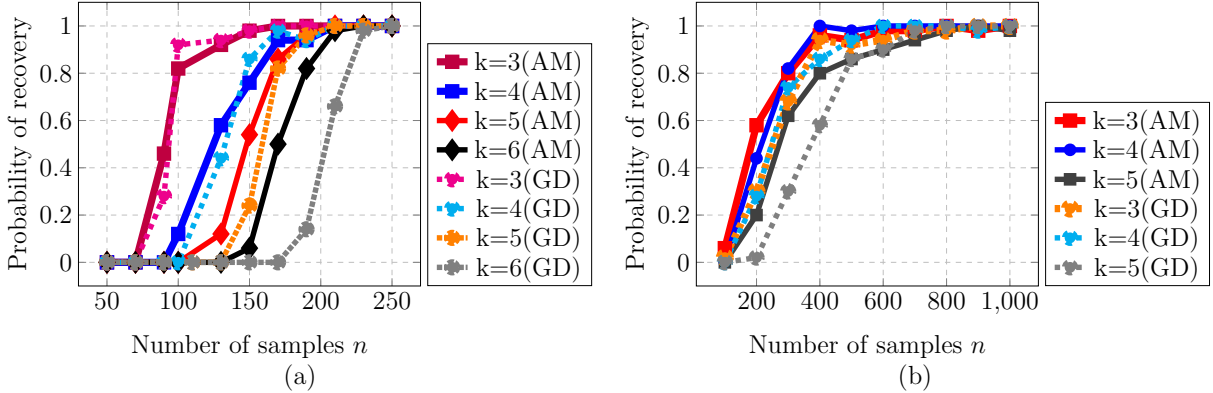

Again, we select the data matrix $X\in \reals^{n \times d}$ with entries picked from a normal $\gauss(0,1/n)$ distribution. Following the setup of \cite{zhong}, we pick $W^*$ such that $W^* = U \Sigma V^\top$, where $U \in \reals^{d\times k}$ and $V \in \reals^{k \times k}$ are obtained from the reduced QR decomposition of a random Gaussian matrix. The diagonal matrix $\Sigma$ is chosen to have entries $1,1+\frac{\kappa-1}{k-1},1+\frac{2(\kappa-1)}{k-1},\dots, \kappa$, with $\kappa=1.8$ (such that $\sigma_1/\sigma_k = \kappa = 1.8$) and varying values of $k$, constituting the forward model in Eqn. \ref{eq:fwd1hidden}.  The weight matrix is initialized\footnote{The procedure in \cite{zhong} can also be used for initialization, however it requires calculation and decomposition of fourth order tensors for ReLU networks. } by (i) perturbing the ground truth such that $\frob{W^0-W^*}\leq 0.9\frob{W^*}$, and (ii) using scaled Gaussian initialization $0.0001 \cdot \gauss(0,\mathbf{I})$. We fix the dimension of the input layer to $d=20$ and vary the dimension of the hidden layer $k=2,3,4,5$. We also sweep the number of training samples $n$ was swept from 100 to 2000 in steps of 100. The training process was repeated for 50 trials. Recovery is said to be successful if $\frob{W^T_\pi-W^*_\pi}/\frob{W^*_\pi} < 0.01$. 

We plot corresponding phase transitions in Figure \ref{fig:phase_trans1_2}, where the recovery performance is measured for $50$ different instantiations (Monte Carlo trials) of the same set of input matrices, with two different initialization schemes. We also compared the performance of our algorithm with that in which the loss function in \eqref{eq:lossfn} is solved using \textit{gradient descent} with appropriately chosen constant step size $\eta$. It may be noted that even though this gives comparatively worse performance than when the algorithm is initialized well (perturbed ground truth), we observe that a random initialization (in a $\delta$-ball around the origin) is capable of recovering weights $W^*$ with sufficiently many samples, with \textit{both} alternating minimization and gradient. 

The phase transition diagrams in Figure~\ref{fig:phase_trans1_2} suggests, that  empirically, alternating minimization is able to perform comparably, or at times even better than gradient descent. This suggests that as the optimization landscape gets more complicated, alternating minimization might be able to avoid certain local minima and saddle points that gradient descent gets stuck in otherwise. To further contrast this performance, we plot training error for a single trial via both algorithms in Figure \ref{fig:phase_trans2} (a). Note that for both good- and random-initializations, the iteration complexity of our method improves upon that of gradient descent. Note that this does \emph{not} mean that our algorithm is necessarily faster (since each iteration requires a least-squares step); however, it does enable the use of out-of-the-box optimized least-squares solvers and avoids learning rate tuning.

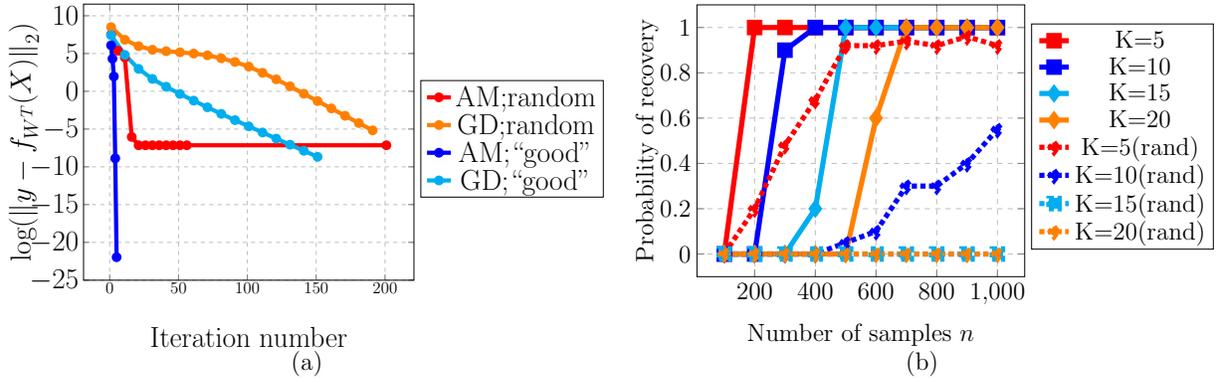
\begin{figure} 

	\centering
	\resizebox{\textwidth}{!}{
		\begin{tabular}{cc}
			\begin{tikzpicture}[scale=0.8]
\tikzstyle{every node}=[font=\huge]
\begin{axis}
[width=0.6\textwidth,
xlabel= Iteration number, 
ylabel= log($\twonorm{y-f_{W^T}(X)}$),
grid style = dashed,
grid=both,
legend style=
{at={(1.02,0.3)}, 
anchor=south west, 
} ,
x label style={at={(axis description cs:0.5,-0.15)},anchor=north},
y label style={at={(axis description cs:-0.12,.5)},anchor=south},
        xticklabel style = {font=\large,yshift=0.5ex}
]

\addplot[color=red, solid,line width=3pt, mark size=2pt, mark=*] plot coordinates {
(1,7.5132)
(6,5.3718)
(11,4.4787)
(16,-6.0394)
(21,-7.1470)
(26,-7.1470)
(31,-7.1470)
(36,-7.1470)
(41,-7.1470)
(46,-7.1470)
(51,-7.1470)
(56,-7.1470)
(201,-7.1470)
};

\addplot[color=orange, solid, mark size=2pt,line width=3pt, mark=*] plot coordinates {
(1,8.4964)
(11,6.8242)
(21,5.9740)
(31,5.5004)
(41,5.3084)
(51,5.1642)
(61,4.9956)
(71,4.7645)
(81,4.3732)
(91,3.8842)
(101,3.2710)
(111,2.4499)
(121,1.5855)
(131,0.6565)
(141,-0.3204)
(151,-1.2944)
(161,-2.2672)
(171,-3.2438)
(181,-4.2038)
(191,-5.1706)
};



\addplot[color=blue, solid,line width=3pt, mark size=2pt, mark=*] plot coordinates {
	(1,6.0855)
	(2,4.3160)
	(3,1.9629)
	(4,-8.8721)
	(5,-21.9657)	
}; 

\addplot[color=cyan, solid,line width=3pt, mark size=2pt, mark=*] plot coordinates {
(1,7.4247)
(11,4.8166)
(21,2.9717)
(31,1.6372)
(41,0.6049)
(51,-0.3487)
(61,-1.2364)
(71,-2.1214)
(81,-2.9859)
(91,-3.8201)
(101,-4.6408)
(111,-5.4550)
(121,-6.2651)
(131,-7.0713)
(141,-7.8742)
(151,-8.6751)
};

\legend{AM;random,GD;random,AM;``good",GD;``good"}

\end{axis}
\end{tikzpicture}	& 
\begin{tikzpicture}[scale=0.99]
\tikzstyle{every node}=[font=\Large]
\begin{axis}
[width=0.5\textwidth,
xlabel= Number of samples $n$, 
ylabel= Probability of recovery,
grid style = dashed,
grid=both,
legend style=
{at={(1.02,0.1)}, 
anchor=south west, 
} ,
x label style={at={(axis description cs:0.5,-0.15)},anchor=north},
y label style={at={(axis description cs:-0.1,.5)},anchor=south}
]

\addplot[color=red, solid,line width=3pt, mark size=3pt, mark=square*] plot coordinates {
	(100,0)
	(200,1)
	(300,1)
	(400,1)
	(500,1)
	(600,1)
	(700,1)
	(800,1)
	(900,1)
	(1000,1)
};

\addplot[color=blue, mark size=3pt,line width=3pt, mark=square*] plot coordinates {
(100,0)
(200,0)
(300,0.9)
(400,1)
(500,1)
(600,1)
(700,1)
(800,1)
(900,1)
(1000,1)
};

%
\addplot[color=cyan, mark size=3pt,line width=3pt, mark=diamond*] plot coordinates {
(100,0)
(200,0)
(300,0)
(400,0.2)
(500,1)
(600,1)
(700,1)
(800,1)
(900,1)
(1000,1)
};

\addplot[color=orange, mark size=3pt,line width=3pt, mark=diamond*] plot coordinates {
(100,0)
(200,0)
(300,0)
(400,0)
(500,0)
(600, 0.6)
(700,1)
(800,1)
(900,1)
(1000,1)
};

\addplot[color=red, dotted, mark size=3pt,line width=3pt, mark=diamond*] plot coordinates {
	(100,0)
	(200,0.2)
	(300,0.48)
	(400,0.68)
	(500,0.92)
	(600,0.92)
	(700,0.94)
	(800,0.92)
	(900,0.96)
	(1000,0.92)
};
\addplot[color=blue, dotted, mark size=3pt,line width=3pt, mark=diamond*] plot coordinates {
	(100,0)
	(200,0)
	(300,0)
	(400,0)
	(500,0.05)
	(600,0.1)
	(700,0.3)
	(800,0.3)
	(900,0.4)
	(1000,0.55)
};

\addplot[color=cyan, dotted, mark size=3pt,line width=3pt, mark=square*] plot coordinates {
	(100,0)
	(200,0)
	(300,0)
	(400,0)
	(500,0)
	(600,0)
	(700,0)
	(800,0)
	(900,0)
	(1000,0)
};

\addplot[color=orange, dotted, mark size=3pt,line width=3pt, mark=diamond*] plot coordinates {
	(100,0)
	(200,0)
	(300,0)
	(400,0)
	(500,0)
	(600,0)
	(700,0)
	(800,0)
	(900,0)
	(1000,0)
};


\legend{K=5,K=10,K=15,K=20,K=5(rand),K=10(rand),K=15(rand),K=20(rand)}

\end{axis}
\end{tikzpicture}\\
		\Large	(a) &\Large (b)
		\end{tabular} 
	}
	\caption{ \textit{(a) Training loss for a fixed trial at $d=20$, $k=5$, and $n=2000$ using alternating minimization and gradient descent under random and ``good" initializations.(b) Phase transition 
		for ResNet-type network, via alternating minimization for $d=20$, with identity (solid) and random (dashed) initializations.  \label{fig:phase_trans2}}}
	
\end{figure}

\paragraph{Residual networks:} We consider a similar 1-hidden layer network as above, but now add \emph{skipped connections} from a random subset of the input layer to the output. Therefore, the ``effective'' weights now become  $\hat{W}^* = W^* + \mathbf{I}_S$ with individual elements of $W^*$ picked from the Gaussian distribution $\gamma\cdot\gauss(0,1/n)$ with $\gamma = 3$ and $\mathbf{I}_S$ being a subset of columns of the identity matrix $\mathbf{I}_{d\times d}$, $S$ representing corresponding indices, and $card(S) = k \leq d$. 

First, we let $d=k=20$, so that the effective weights $\hat{W}^* = \I + W^* \in \reals^{20 \times 20}$. 
We then reduce the number of hidden neurons to $k=5,10,15$, respectively, by dropping or deactivating some pre-fixed neurons. If $S = S_5, S_{10}, S_{15}$ (card$(S_k)=k$) are randomly permuted indices between 1 to 20, 
the weight matrix is initialized simply as $\hat{W}_0 \gets \mathbf{I}_S$.
The number of training samples $n$ was swept from 100 to 1000 in steps of 100. We repeat over 20 trials with each trial being a random instantiation of the data matrix $X$. Recovery is said to be successful if $\|{\hat{W}^T_\pi-\hat{W}^*_\pi}\|_F/\|{\hat{W}^*_\pi}\|_F < 0.01$. 
We observe that our simple identity-based initialization provides a (strictly) better phase transition than a random initialization; see Figure \ref{fig:phase_trans2} (b).

In the supplementary material (Appendix \ref{sec:appendixC}) we demonstrate additional experiments for 2-hidden layer ReLU networks and demonstrate favorable results.

\section{Discussion}

We have provided a new family of algorithms for training ReLU networks with provable guarantees. While our contributions are largely of theoretical interest, several avenues for future work remain: extending our theoretical results to more challenging data distributions, for deeper network architectures; and exploring the impact of our algorithms on larger-scale real world datasets.

\appendix

\section{Key Lemmas} \label{sec:appendixB}

In this section, we state some Lemmas with or without proofs, required for the proof of Theorem \ref{thm:convergence2}.

\begin{lem} \label{lem:Abound}
If $A := \mathbb{P} X$, where $\mathbb{P}$ is a diagonal matrix with indicators 0 and 1 on the diagonal, and the entries of $X$ are from $\gauss(0,1/n)$, then the operation of $\mathbb{P}$ on Gaussian matrix $X$, extracts a sub-matrix of $X$ such that,
\begin{align*}
\twonorm{A^\top} \leq 1 + \frac{1}{\sqrt{C}} \approx 1,
\end{align*}
with probability greater than $1-\frac{\eta}{10}$, if $n > C d \cdot \log{\frac{1}{\eta}}$, where $C$ is large enough and $\eta$ is a constant.

\end{lem}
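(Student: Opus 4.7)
The plan is to reduce the bound on $\twonorm{A^\top}$ to a standard concentration inequality for the spectral norm of a Gaussian matrix. First I would observe that since $\mathbb{P}$ is a diagonal $\{0,1\}$-matrix, the action of $\mathbb{P}$ on $X$ simply zeroes out the rows of $X$ at indices where $\mathbb{P}_{ii}=0$ and leaves the remaining rows untouched. Writing $S := \{i : \mathbb{P}_{ii}=1\}$ and letting $X_S \in \reals^{|S|\times d}$ denote the submatrix of $X$ formed by the rows indexed by $S$, we have $\twonorm{A} = \twonorm{A^\top} = \twonorm{\mathbb{P} X} = \twonorm{X_S}$, since zero rows contribute nothing to the spectral norm.

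Next, because the entries of $X$ are i.i.d.\ $\gauss(0,1/n)$, the matrix $\sqrt{n}\,X_S$ has i.i.d.\ standard normal entries and dimensions $|S|\times d$. Applying a standard Gaussian concentration bound (Davidson--Szarek, or equivalently Gordon's theorem) yields
\begin{equation*}
\sigma_{\max}(\sqrt{n}\,X_S) \;\leq\; \sqrt{|S|} + \sqrt{d} + t
\end{equation*}
with probability at least $1 - 2e^{-t^2/2}$. Rescaling and using the deterministic bound $|S|\leq n$, this gives
\begin{equation*}
\twonorm{A^\top} \;=\; \sigma_{\max}(X_S) \;\leq\; \frac{\sqrt{|S|} + \sqrt{d} + t}{\sqrt{n}} \;\leq\; 1 + \sqrt{\tfrac{d}{n}} + \tfrac{t}{\sqrt{n}}.
\end{equation*}

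Finally, I would pick $t = \sqrt{2\log(20/\eta)}$ so that the failure probability is at most $\eta/10$. Under the hypothesis $n > Cd\log(1/\eta)$ with $C$ sufficiently large, both $\sqrt{d/n}$ and $t/\sqrt{n}$ are at most on the order of $1/\sqrt{C}$, so combining the two additive error terms into a single $1/\sqrt{C}$ (by absorbing constants into $C$) gives the claimed bound $\twonorm{A^\top} \leq 1 + 1/\sqrt{C}$.

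There is no real obstacle here — the lemma is essentially a direct application of Gaussian matrix concentration once one recognizes that $\mathbb{P}$ just selects a subset of rows. The only subtlety is that $\mathbb{P}$ could in principle depend on $X$ (through the current iterate's signatures), which would require a union bound over all $2^n$ sign patterns; however, for this lemma we treat $\mathbb{P}$ as a fixed $\{0,1\}$-diagonal matrix, and the worst case is simply $|S|=n$, which is already absorbed into the bound.
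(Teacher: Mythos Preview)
Your argument is correct and lands on the same bound as the paper. The paper's proof is a touch cleaner: it uses submultiplicativity, $\twonorm{\mathbb{P}X}\leq\twonorm{\mathbb{P}}\,\twonorm{X}\leq\twonorm{X}$, which holds \emph{deterministically} for every $\{0,1\}$-diagonal $\mathbb{P}$ and so sidesteps the dependence-of-$S$-on-$X$ issue you flag at the end; it then invokes the same Gaussian spectral-norm concentration on $X$ that your $|S|=n$ worst case reduces to.
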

\begin{proof}
Then spectral norm of the indicator matrix is bounded as $0 \leq \twonorm{\mathbb{P}} \leq 1$. Subsequently,
\begin{align*}
\twonorm{A^\top} = \twonorm{A} &\leq \twonorm{\mathbb{P}}\twonorm{X} \\
&\leq 1\cdot \twonorm{X} \leq 1 + \frac{1}{\sqrt{C}},
\end{align*}
where, $$1-\frac{1}{\sqrt{C}} \leq \sigma_{min}(X) \leq \twonorm{X} \leq \sigma_{max}(X) \leq 1+\frac{1}{\sqrt{C}}$$ is bounded using standard results from random matrix theory \cite{Vershynin10}, with probability greater than $1-\frac{\eta}{10}$, if $n > C d \cdot \log{\frac{1}{\eta}}$, where $C$ is a large constant and $\eta$ is a constant.
\end{proof}

\begin{corollary} \label{cor:Abound}
	For $B := [A_1 \dots A_q \dots A_k ] := [\mathbb{P}_1 X\dots \mathbb{P}_q X \dots \mathbb{P}_k X]$, where $\mathbb{P}_q$'s are diagonal matrices with indicators 0 and 1 on the diagonal, and the entries of $X$ are from $\gauss(0,1/{n})$, then the operation of each $\mathbb{P}_k$ on Gaussian matrix $X$, extracts a sub-matrix of $X$ such that:
	\begin{align*}
	\twonorm{B^\top} = \sigma_{max}{(B)} &= \twonorm{\mathbb{P}_1 X\:\: \mathbb{P}_2 X \dots \mathbb{P}_k X}\\
	&\leq \twonorm{ X \dots (k\: times) \dots  X} \\&\leq {\sqrt{k}}\rbrak{1+\frac{1}{\sqrt{C}}} 
	\end{align*} 	
	as long as $n > C\cdot d \cdot \log \frac{1}{\eta}$, with probability greater than $1-\frac{\eta}{10}$. 
\end{corollary}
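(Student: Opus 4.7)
The plan is to reduce the bound on $\sigma_{\max}(B)$ to the single-block bound already proved in Lemma~\ref{lem:Abound}. Since $B = [A_1, \ldots, A_k]$ is a horizontal concatenation, the Gram matrix splits as $B B^\top = \sum_{q=1}^k A_q A_q^\top$, and therefore $\sigma_{\max}(B)^2 = \|BB^\top\| \leq \sum_{q=1}^k \|A_q A_q^\top\| = \sum_{q=1}^k \sigma_{\max}(A_q)^2$. This reduces everything to controlling each $\sigma_{\max}(A_q)$ individually, which is exactly what Lemma~\ref{lem:Abound} provides.

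Next, I would apply Lemma~\ref{lem:Abound} once for each block $A_q = \mathbb{P}_q X$. Each application yields $\sigma_{\max}(A_q) \leq 1 + 1/\sqrt{C}$ with probability at least $1 - \eta/(10k)$, provided $n > C d \log(k/\eta)$; a union bound over the $k$ blocks then ensures all $k$ bounds hold simultaneously with probability at least $1 - \eta/10$. Substituting into the previous display gives
\[
\sigma_{\max}(B)^2 \;\leq\; k \bigl(1 + 1/\sqrt{C}\bigr)^2,
\]
and taking square roots yields $\sigma_{\max}(B) \leq \sqrt{k}\bigl(1 + 1/\sqrt{C}\bigr)$, as claimed. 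The heuristic interpretation in the statement, $\|[\mathbb{P}_1 X, \ldots, \mathbb{P}_k X]\| \leq \|[X, \ldots, X]\|$, follows from the same block-Gram identity, since replacing any $\mathbb{P}_q$ by the identity only increases $A_q A_q^\top$ in the positive semidefinite order.

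There is no real obstacle here; the only subtlety is bookkeeping constants so that the sample-complexity bound absorbs the $\log k$ factor from the union bound into the ``large enough'' constant $C$. Since the paper already treats $k \leq d$ in later arguments (and the corollary is used inside bounds of the form $n \gtrsim d k^2 \log k$), this extra logarithmic factor is harmless and can be folded into the generic constant, matching the $n > C d \log(1/\eta)$-style requirement stated in the corollary.
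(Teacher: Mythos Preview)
Your argument is correct and matches the paper's reasoning (the paper embeds the proof in the corollary statement itself via the displayed chain of inequalities, using the same block-Gram identity $BB^\top=\sum_q A_qA_q^\top$). Two minor points are worth flagging. First, the union bound over $k$ blocks is unnecessary: the proof of Lemma~\ref{lem:Abound} controls $\|A_q\|$ via the \emph{deterministic} inequality $\|\mathbb{P}_q X\|\le\|\mathbb{P}_q\|\,\|X\|\le\|X\|$, so all $k$ blocks are handled by the single random event $\{\|X\|\le 1+1/\sqrt{C}\}$, the probability remains $1-\eta/10$, and no $\log k$ factor enters the sample complexity. Second, your aside that ``replacing any $\mathbb{P}_q$ by the identity only increases $A_qA_q^\top$ in the positive semidefinite order'' is not correct in general (for a projection $\mathbb{P}$ and PSD $M$, $\mathbb{P} M\mathbb{P}\preceq M$ can fail; e.g.\ $M=\bigl(\begin{smallmatrix}1&1\\1&1\end{smallmatrix}\bigr)$, $\mathbb{P}=\mathrm{diag}(1,0)$). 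What does hold, and what both your main line and the paper actually use, is the operator-norm inequality $\|\mathbb{P}_q X\|\le\|X\|$, which is enough to get $\|BB^\top\|\le\sum_q\|\mathbb{P}_qX\|^2\le k\|X\|^2$.
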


\begin{lem} \label{lemma:sigmamin}
The Hessian of the square of the cost function in \eqref{eq:lossfn}, for a 1-hidden layer ($L=2$) network is $\nabla^2_{W} \mathscr{L}(W) = B^\top B $, where $\mathscr{L}(W)$ is defined as $\mathscr{L}(W) = \twonorm{B\cdot \vecc{(W)} - y}= \twonorm{\sum\limits_{q=1}^k \sigma(Xw_q) -y}$ (and elements of $X\sim \gauss{(0,1/n)}$), is bounded as:
$$ \Omega(1/(\kappa^2 \lambda)) \mathbf{I} \preceq \nabla_W^2 \mathscr{L}(W) \preceq O(k) \mathbf{I}$$
where the singular values of $W^*$, and the condition numbers $\kappa$ and $\lambda$ are defined as $\sigma_1 \geq \dots \geq \sigma_k$, and $\kappa = \sigma_1/\sigma_k$ and $\lambda = \left(\prod\limits_{q=1}^k\sigma_q\right)  /\sigma_k^k$, as long as the following hold: $n \geq d\cdot k^2 \text{poly}(\log d,t, \lambda, \kappa)$, $\twonorm{W - W^*} \lessapprox \frac{1}{k^2 \kappa^5 \lambda^2} \twonorm{W^*}$, with probability at least $1-d^{-\Omega(t)}$. This result can be re-interpreted as,
$$ \Omega(1/(\kappa^2 \lambda))\leq \sigma_{min}(B^\top B) \leq \sigma_{max}(B^\top B) \leq O(k) $$
and subsequently,
$$  \Omega(1/(\kappa^2 \lambda))\leq \sigma_{min}^2(B) \leq \sigma_{max}^2(B) \leq O(k). $$

Hence $\sigma_{min}^2(B) \geq c \cdot (\kappa^2 \lambda)^{-1} \equiv \delta$, for small enough $c$. 
\end{lem}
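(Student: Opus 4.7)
My plan is to establish the two spectral bounds separately, since the identification $\nabla_W^2 \mathscr{L}(W) = B^\top B$ makes the statement a claim about the singular values of the linearization matrix $B = [\mathbb{P}_1 X\ \cdots\ \mathbb{P}_k X]$. First I would observe that inside any open region where every sign pattern $\mathbb{P}_q = \diag(\mathbbm{1}_{\{Xw_q \geq 0\}})$ is constant, the function $\mathscr{L}(W) = \|B\,\vecc(W) - y\|_2^2$ is exactly quadratic in $\vecc(W)$, so the Hessian equals $B^\top B$ by direct differentiation. The lemma therefore reduces to bounding $\sigma_{\min}^2(B)$ from below and $\sigma_{\max}^2(B)$ from above.

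For the upper bound, I would simply invoke Corollary \ref{cor:Abound}, which already gives $\sigma_{\max}(B) \leq \sqrt{k}(1 + 1/\sqrt{C})$ whenever $n \gtrsim d\log(1/\eta)$. Squaring this yields $\sigma_{\max}(B^\top B) \leq O(k)$, which matches the claimed right-hand side. The cost of this step is essentially free given Corollary \ref{cor:Abound}.

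The lower bound is the main obstacle, and it is where the dependence on the condition numbers $\kappa = \sigma_1/\sigma_k$ and $\lambda = \prod_q \sigma_q / \sigma_k^k$ enters. My plan is to reduce to the Hessian analysis of the ground-truth population loss for 1-hidden layer ReLU networks, as carried out in Zhong et al.\ (Lemma D.6 / Theorem D.2 of \cite{zhong}). Concretely, I would (i) pass from the empirical Hessian $B^\top B$ to the population Hessian $\ex[B^\top B]$ using a uniform matrix-Bernstein / covering-net concentration argument on the $k$ Gaussian sub-matrices $\mathbb{P}_q X$, which incurs the $n \gtrsim d k^2 \cdot \mathrm{poly}(\log d, t, \lambda, \kappa)$ sample complexity with failure probability $d^{-\Omega(t)}$; (ii) invoke the population-level lower bound $\lambda_{\min}(\ex[B^\top B]) \gtrsim 1/(\kappa^2 \lambda)$ known at $W = W^*$; and (iii) use the perturbation bound $\|W - W^*\| \lesssim \|W^*\|/(k^2 \kappa^5 \lambda^2)$ to show that the sign-pattern matrices $\mathbb{P}_q$ differ from those at $W^*$ on only a small fraction of coordinates, so the Hessian at $W$ inherits the same lower bound up to a constant factor. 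The bookkeeping for step (iii) uses an anti-concentration argument for $x^\top w_q^*$ combined with the assumed closeness of $w_q$ to $w_q^*$ to show that the Lebesgue measure of the ``sign-flip'' region is $O(\|W - W^*\|/\|W^*\|)$.

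The hard part of this program is step (iii): one must carefully track how the discontinuity of $\mathbb{P}_q$ in $W$ propagates through the quadratic form $v^\top B^\top B v$ and argue that the contribution of sign-flipped rows is small enough to preserve a constant fraction of the population eigenvalue. Once the lower bound $\sigma_{\min}^2(B) \geq c/(\kappa^2 \lambda)$ is secured, setting $\delta$ equal to this constant completes the proof, and the final chain of inequalities $\Omega(1/(\kappa^2 \lambda)) \leq \sigma_{\min}^2(B) \leq \sigma_{\max}^2(B) \leq O(k)$ follows immediately.
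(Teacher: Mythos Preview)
Your plan is sound, but the paper itself does not prove this lemma at all: it simply states that the result is a direct specialization of Theorem~4.2 of Zhong et al.~\cite{zhong} to the ReLU activation $\phi(z)=\max(z,0)$, obtained by plugging in the parameter values $p=0$, $v_{\min}^*=v_{\max}^*=1$, $\rho=0.091$. No argument is given beyond that citation, and the paper separately remarks that the upper bound $\sigma_{\max}^2(B)\leq O(k)$ agrees with Corollary~\ref{cor:Abound}, exactly as you observed.

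Your three-step program (empirical-to-population concentration, population lower bound at $W^*$, perturbation of the sign patterns under $\|W-W^*\|\lesssim \|W^*\|/(k^2\kappa^5\lambda^2)$) is essentially an outline of how Zhong et al.\ establish their Theorem~4.2 in the first place. So you are not taking a different route so much as reproving the cited external result from scratch, whereas the paper treats it as a black box. What your approach buys is self-containment and an explicit accounting of where each hypothesis (the sample-complexity requirement, the closeness assumption, the $\kappa,\lambda$ dependence) enters; what the paper's approach buys is brevity, since the heavy lifting has already been done in~\cite{zhong}.
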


This Lemma has been adapted from Theorem 4.2 of \cite{zhong} for $\phi(z) = \max(z,0)$, by substituting the following values in their result: $p=0$, $v_{min}^*,v_{max}^* = 1$, $\rho = 0.091$.

Subsequently, $\rho_3$ (which depends monotonically on the initial distance $\delta_0$) can be controlled according to the value of $\delta$, such that $\rho_0 = \delta \cdot \rho_3 < 1$, in proof of Theorem \ref{thm:convergence2}. Hence a poor $\delta$ can be compensated by initializing closer to ground truth.

\textbf{Note:} The upper bound $\sigma_{max}^2 \leq O(k)$, matches with our result from Corollary \ref{cor:Abound} which states that $\sigma_{max}^2(B) \leq k\rbrak{1+\frac{1}{\sqrt{C}}}$.

\begin{lem} \label{lem:phase_error}
	As long as the initial estimate $w^0_q$ is a small distance away from the true weights $w^*_q$, $\distop{w^0_q}{w^*_q} \leq \delta_0 \twonorm{w^*_q}$,
	and subsequently,
	$\distop{w^t_q}{w^*_q} \leq \delta_0 \twonorm{w^*_q}$, where $w^t_q$ is the $t^{th}$ estimate of weight vector $w_q$, then the following bound holds,
	\begin{align*}
	 \sum_{i=1}^{m} \rbrak{x_i^\top w^*_q}^2\cdot \mathbf{1}_{\cbrak{ (x_i^\top w^t_q)(x_i^\top w^*_q)\leq 0}} \leq  \rho_{2,q}^2 \twonorm{w^t_q - w^*_q}^2,
	\end{align*}
	with probability greater than $1-\eta$, where $\eta$ is a small positive constant close to $0$, as long as $ n > C \cdot d$ and $\rho_2^2 \approx 0.14$, if $\delta_0 = \frac{1}{10}$ and elements of $x_i,  x_{ij}\sim \gauss(0,1/n)$. 
\end{lem}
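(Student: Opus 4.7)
The plan is to start with the elementary observation that on the ``disagreement'' event $\{(x_i^\top w_q^t)(x_i^\top w_q^*)\le 0\}$, the value $|x_i^\top w_q^*|$ must be dominated by $|x_i^\top(w_q^t-w_q^*)|$. Indeed, if the two inner products have opposite signs, then $x_i^\top(w_q^t-w_q^*)=x_i^\top w_q^t-x_i^\top w_q^*$ has magnitude at least $|x_i^\top w_q^*|$. Hence
\[
\sum_{i=1}^{n}(x_i^\top w_q^*)^2\,\mathbf{1}_{\{(x_i^\top w_q^t)(x_i^\top w_q^*)\le 0\}}\;\le\;\sum_{i=1}^{n}\bigl(x_i^\top(w_q^t-w_q^*)\bigr)^2\,\mathbf{1}_{\{(x_i^\top w_q^t)(x_i^\top w_q^*)\le 0\}} .
\]
The right-hand side is what I would actually control; the factor $\|w_q^t-w_q^*\|^2$ will be pulled out, leaving an average weighted by the ``wedge'' indicator.

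Next I would estimate the expectation of each summand via rotational invariance of the Gaussian. Let $\alpha$ denote the angle between $w_q^t$ and $w_q^*$. Project $x_i$ onto the two-dimensional subspace spanned by $w_q^t$ and $w_q^*$, write this projection in polar form $(r\cos\theta,r\sin\theta)$ with $r^2$ Exponential$(n/2)$ and $\theta$ uniform, and observe that the disagreement event becomes a wedge of angular width $\alpha$. A direct integral in polar coordinates then gives
\[
\mathbb{E}\Bigl[\bigl(x_i^\top(w_q^t-w_q^*)\bigr)^2\,\mathbf{1}_{\{(x_i^\top w_q^t)(x_i^\top w_q^*)\le 0\}}\Bigr]\;\le\;\frac{c\,\alpha}{n}\,\|w_q^t-w_q^*\|^2
\]
for an absolute constant $c$. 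The hypothesis $\|w_q^t-w_q^*\|\le\delta_0\|w_q^*\|$ yields $\sin(\alpha/2)\le\delta_0/2$, so $\alpha\lesssim\delta_0$; summing over the $n$ samples gives the desired expectation-level bound $c\,\delta_0\,\|w_q^t-w_q^*\|^2$, and with $\delta_0=1/10$ one can arrange the numerical constant $\rho_{2,q}^2\approx 0.14$.

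The main obstacle, and the step that uses the sample-complexity hypothesis $n>C\cdot d$, is passing from this in-expectation bound to a high-probability bound that is uniform over feasible $w_q^t$. Each summand is a sub-exponential random variable (product of a squared Gaussian and a bounded indicator), so for a fixed direction a Bernstein-type tail gives deviation $O(\sqrt{d/n}\,\|w_q^t-w_q^*\|^2)$. To make this hold simultaneously for every $w_q^t$ in the $\delta_0$-ball around $w_q^*$, I would discretise that ball by an $\epsilon$-net of cardinality $(3/\epsilon)^d$, apply Bernstein and a union bound, and absorb the perturbation error using the Lipschitz dependence of the objective in $w_q^t$ (the indicator changes on at most $O(\epsilon n)$ samples by a standard small-ball argument for Gaussians). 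Combining the expectation computation with this uniform deviation bound yields the stated constant $\rho_{2,q}^2\approx 0.14$ with failure probability $\eta$ shrinking once $n\ge Cd$, completing the argument.
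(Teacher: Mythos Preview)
Your proposal is correct. The paper does not supply its own proof of this lemma; it simply states that the result is adapted from Lemma~3 of Zhang, Chi, and Liang (2016, ``Reshaped Wirtinger Flow''), and your outline matches that reference's argument essentially step for step: the pointwise bound $(x_i^\top w_q^*)^2\le(x_i^\top(w_q^t-w_q^*))^2$ on the sign-disagreement event, the angular-wedge expectation computation via rotational invariance of the Gaussian, and the passage to a uniform high-probability bound over the $\delta_0$-ball by an $\epsilon$-net plus Bernstein-type concentration, which is where the sample requirement $n>Cd$ enters. The numerical value you target, $\rho_{2,q}^2\approx 0.14$ at $\delta_0=1/10$, agrees with the $0.13+c\epsilon$ that the paper itself quotes from equations~(58) and~(63) of the cited reference when proving the subsequent corollary.
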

 This lemma has been adapted from Lemma 3 of  \cite{zhang2016reshaped}. Note that $\rho_2$ is a monotonically increasing function of $\delta_0$.

\begin{corollary} \label{cor:phase_error}
	As long as the initial estimate $W^0$ is a small distance away from the true weights $W^*$, $\distop{W^0}{W^*} \leq \delta_1 \frob{W^*}$,
	and subsequently,
	$\distop{w^t_q}{w^*_q} \leq \delta_{0} \twonorm{w^*_q}$, where $w^t_q$ is the $t^{th}$ estimate of weight vector $w_q$, for all $q\in\{1,\dots k\}$ then the following bound holds,
	\begin{align*}
 \twonorm{E_{{sgn}_q}}^2 &= \sum_{i=1}^{n} \rbrak{x_i^\top w^*_q}^2\cdot \mathbbm{1}_{\cbrak{ (x_i^\top w^t_q)(x_i^\top w^*_q)\leq 0}} \\
 &\leq  \frac{\rho_2^2}{k} \twonorm{w^t_q - w^*_q}^2,
	\end{align*}
	with probability greater than $1-\eta'$, where $\eta'$ is a small constant close to $0$, as long as $ n > C \cdot d k^2$ and $\rho^2_{2,q} \approx 0.14$, if $\delta_{0} = \frac{1}{10}$ and elements of $X$, $x_{ij}\sim \gauss(0,1/{n})$.
	
	Subsequently, for $k$ such neurons $w_1,w_2,\dots w_k$, we take a union bound over this probability, such that,	
	\begin{align*}
	\rbrak{\sum_{q=1}^k \twonorm{E_{{sgn}_q}}}^2 &\leq  k \sum_{q=1}^k \twonorm{E_{{sgn}_q}}^2 \\ &\leq k\max_q{\rbrak{\rho_{2,q}^2}} \frob{W_{\pi}^t - W_{\pi}^*}^2\\ &\leq {\rho_{3}^2} \frob{W_{\pi}^t - W_{\pi}^*}^2 
	\end{align*}
	where $\rho_3^2 < 0.14$, with probability greater than $1-\eta$, to yield sample complexity $n > C \cdot d\cdot k^2 \cdot \log k$, where $\eta$ is a small constant close to 0. 
\end{corollary}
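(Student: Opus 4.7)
The approach is to combine Lemma~\ref{lem:phase_error}, applied to each hidden neuron individually, with a union bound over $q$ and a Cauchy--Schwarz step to package the result in Frobenius norm. The plan is as follows.

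First, I would sharpen the per-neuron bound of Lemma~\ref{lem:phase_error} so that the constant scales as $\rho_2^2/k$ rather than the fixed $\rho_{2,q}^2 \approx 0.14$. The quantity $\sum_i (x_i^\top w_q^*)^2 \mathbbm{1}_{\{(x_i^\top w_q^t)(x_i^\top w_q^*) < 0\}}$ is a sum of non-negative random variables whose expectation is controlled by the angle between $w_q^t$ and $w_q^*$ (in turn controlled by the initialization radius $\delta_0$). A standard Bernstein / sub-exponential concentration argument, of the kind used in the proof of Lemma~3 of \cite{zhang2016reshaped}, converts the expectation bound into a high-probability one whose deviation term shrinks like $\sqrt{d/n}$ (up to logarithmic factors). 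To replace the absolute constant $0.14$ by $0.14/k$, one accordingly requires the residual concentration term to be dominated by $1/k$, which forces the per-neuron sample complexity to satisfy $n > C \cdot d k^2$.

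Next, I would take a union bound across the $k$ neurons. Requiring each per-neuron event to fail with probability at most $\eta'/k$ inflates the sample complexity by an additive $\log k$ factor inside the concentration exponent, giving the final requirement $n > C \cdot d k^2 \log k$. On the intersection of the $k$ good events we obtain
\begin{align*}
\sum_{q=1}^k \twonorm{E_{sgn_q}}^2 \;\leq\; \frac{\rho_2^2}{k} \sum_{q=1}^k \twonorm{w_q^t - w_q^*}^2 \;=\; \frac{\rho_2^2}{k}\, \frob{W^t - W^*}^2.
\end{align*}
Applying Cauchy--Schwarz in the form $\bigl(\sum_{q=1}^k a_q\bigr)^2 \leq k \sum_{q=1}^k a_q^2$ with $a_q = \twonorm{E_{sgn_q}}$ and substituting the previous display yields
\begin{align*}
\Bigl(\sum_{q=1}^k \twonorm{E_{sgn_q}}\Bigr)^2 \;\leq\; k \cdot \frac{\rho_2^2}{k}\, \frob{W^t - W^*}^2 \;=\; \rho_2^2\, \frob{W^t - W^*}^2,
\end{align*}
so that setting $\rho_3^2 := \rho_2^2 < 0.14$ gives the desired conclusion, matching the constant required by Theorem~\ref{thm:convergence2}.

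The main obstacle lies in the first step: the quantitative dependence on sample size that is hidden inside Lemma~\ref{lem:phase_error}. As cited, the lemma only delivers a fixed constant $\approx 0.14$, and shrinking it to $1/k$ demands reopening the underlying concentration argument and tracking the joint dependence on $\delta_0$, $n$, and the angle $\distop{w_q^t}{w_q^*}/\twonorm{w_q^*}$. Once that sharpened per-neuron statement is in hand, the remaining arguments --- union bound over $q$ and a one-line Cauchy--Schwarz --- are essentially mechanical.
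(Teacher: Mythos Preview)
Your proposal is correct and follows essentially the same route as the paper: sharpen the per-neuron concentration bound from Lemma~\ref{lem:phase_error} so that the constant scales as $\rho_2^2/k$ (costing a factor $k^2$ in sample size), take a union bound over the $k$ neurons (costing the additional $\log k$), and finish with Cauchy--Schwarz on $\sum_q \twonorm{E_{sgn_q}}$. The paper makes the sharpening step concrete by quoting the $\epsilon$-net probability bound $1-(1+2/\epsilon)^d e^{-cn\epsilon^2}$ from \cite{zhang2016reshaped} and showing that forcing the slack down to order $1/k$ replaces the exponent by $cn\epsilon^2/k^2$, which is exactly the mechanism you describe abstractly.
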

\begin{proof}
 This lemma has been adapted from Lemma 3 of  \cite{zhang2016reshaped} by modifying equation (58) and subsequently equation (63) as follows:
 \begin{align} \label{eq:single_rho}
 \twonorm{E_{{sgn}_q}}^2 \leq (0.13 + c\epsilon)\twonorm{w^t_q - w^*_q}^2
 \end{align}
 with probability greater than $1-(1+\frac{2}{\epsilon})^d e^{-cn\epsilon^2}$ where $c$ is a small constant, and the factor $c_o = 0.13$ can be reduced by reducing the initial distance factor $\delta_0$ (i.e., $c_o \propto \delta_0 $, refer Proof of Lemma 3 of \cite{zhang2016reshaped} for further details). 
 
To deconstruct $\delta_1$, we have 
 \begin{align*}
  \distop{W^{t+1}}{W^*} &\leq \delta_1 \distop{W^{t}}{W^*}\\
  \sum_{q=1}^k \distop{w_q^{t+1}}{w_q^*}^2 &\leq \delta_1^2 \sum_{q=1}^k \distop{w_q^{t}}{w_q^*}^2
 \end{align*}
We effectively require for each neuron, that 
 \begin{align*}
 \distop{w_q^{t+1}}{w_q^*}^2 &\lessapprox \delta_1^2 \distop{w_q^{t}}{w_q^*}^2
\end{align*}
Therefore $\delta_1 \approx \delta_0$.
 We need to ensure that the value of $\rho_3 < 1$ in the Equation \eqref{eq:phase_rho}. For this, we need to ensure that \textit{each} of $\rho_{2,q}^2 < \frac{1}{k}$. Hence we are required to evaluate Equation \eqref{eq:single_rho}, such that:
 \begin{align} \label{eq:pre_union}
\twonorm{E_{{sgn}_q}}^2 \leq \frac{1}{{k}}(0.13 + c\epsilon)\twonorm{w^t_q - w^*_q}^2
 \end{align}
for $q^{th}$ neuron. The probability of this event is $1-(1+\frac{2}{\epsilon})^d e^{-cn\frac{\epsilon^2}{k^2}}$ where $c$ is a small constant. We further require this condition to hold for all $k$ neurons. Hence we take a union bound, such that \ref{eq:pre_union} holds with probability  $1-k(1+\frac{2}{\epsilon})^d e^{-cn\frac{\epsilon^2}{k^2}}$ for all $k$ neurons. To evaluate sample complexity, 
\begin{gather*}
k\rbrak{1+\frac{2}{\epsilon}}^d e^{-cn\frac{\epsilon^2}{k^2}} < \eta \\
	\implies (\log k)  \cdot d \cdot  \log\rbrak{1+\frac{2}{\epsilon}} - cn\frac{\epsilon^2}{k^2}< \log \eta \\
	\implies n > C \rbrak{k^2 \log k \cdot d \cdot \frac{1}{\epsilon^{2}} \log \frac{1}{\epsilon} + \frac{1}{\epsilon^{2}} \cdot k^2 \cdot \log \frac{1}{\eta}},
\end{gather*}
where $C$ is a constant large enough.
\end{proof}

\section{Supplementary algorithms and experiments} \label{sec:appendixC}

\begin{algorithm}[!t]
	\caption{Training 2-hidden layer ReLU network via Alternating Minimization}
	\label{algo:hidden2}
	\begin{algorithmic}[1]
		\Require $X,y,T, k, k_o$
		\item[]
		\item \textbf{Initialize:} $W_1^0,W_2^0$ s.t. :
		\item[] $\distop{W_1^0}{W_1^*} \leq \delta_1 \frob{W_1^*}$,
		\item[] $\distop{W_2^0}{W_2^*} \leq \delta_2 \frob{W_2^*}$. 
		\item[]	
		\For{$t = 0,\cdots,T-1$}
		\State	${p}^{1,t}_{q} \leftarrow \mathbbm{1}_{\{X w^{1,t}_{q_1}>0\}} $, \quad $\forall q \in [d_2]$,
		\State	${p}^{2,t}_{r} \leftarrow \mathbbm{1}_{\{\sigma{(XW^{1,t})} w^{2,t}_r>0\}}$, \quad $\forall r \in [d_3]$,
		\item []
		\State $C_q^t \gets \sum_{r=1}^{k_o} w_{rq}^{2,t}$ diag$(p_r^{2,t}) \cdot$ diag$(p_q^{1,t}) \cdot X$, \quad $\forall q$,
		\State $\mathbf{C}^t \gets [C_1^t \dots C_{d_2}^t]$,
		\State $\vecc(W^1)^{t+1} \gets \argmin\limits_{\vecc(W^{1})}\twonorm{\mathbf{C}^t\cdot \vecc(W^{1})-y}$,
		\State $W^{1,t+1} \gets \text{reshape}(\vecc(W^{1,t+1}),[d_1,d_2])$,
		\item[]
		\State $B_r^t \gets \mathbb{P}_r^{2,t}\cdot [\mathbb{P}^{1,t}_1 X w_1^{1,t+1} \dots \mathbb{P}^{1,t}_{d_2} X w_{d_2}^{1,t+1}]$,  $\forall r$,
		\State $\mathbf{B}^t \gets [B_1^t \dots B_{d_3}^t]$,
		\State $\vecc(W^2)^{t+1} \gets \argmin \limits_{vec(W^{2})}\twonorm{\mathbf{B}^t\cdot vec(W^{2})-y}$,
		\State $W^{2,t+1} \gets \text{reshape}(\vecc(W^{2,t+1}),[d_2,d_3])$.
		\EndFor
		\item[]
		\Ensure $W^{1,T} \leftarrow W^{1,t}$, $W^{2,T} \leftarrow W^{2,t}$.
	\end{algorithmic}
\end{algorithm}

\subsection{Training 2-hidden layer network with ReLU activation}

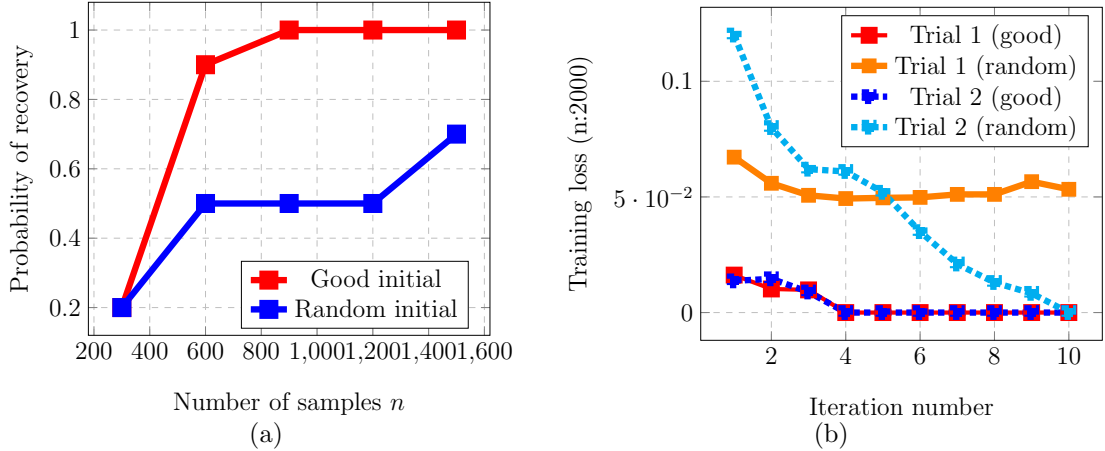
\begin{figure}[!h]
	\centering 
	\begin{tabular}{cc}
		\begin{tikzpicture}[scale=0.8]
\tikzstyle{every node}=[font=\large]
\begin{axis}
[width=0.5\textwidth,
xlabel= Number of samples $n$, 
ylabel= Probability of recovery,
grid style = dashed,
grid=both,
legend style=
{at={(0.38,0.02)}, 
anchor=south west, 
} ,
x label style={at={(axis description cs:0.5,-0.15)},anchor=north},
y label style={at={(axis description cs:-0.12,.5)},anchor=south}
]

\addplot[color=red, solid,line width=3pt, mark size=3pt, mark=square*] plot coordinates {
	(300,0.2)
	(600,0.9)
	(900,1)
	(1200,1)
	(1500,1)

};

\addplot[color=blue, solid,line width=3pt, mark size=3pt, mark=square*] plot coordinates {
	(300,0.2)
	(600,0.5)
	(900,0.5)
	(1200,0.5)
	(1500,0.7)
	
};

%
%
%
%

%

\legend{Good initial,Random initial}

\end{axis}
\end{tikzpicture} &\begin{tikzpicture}[scale=0.8]
\tikzstyle{every node}=[font=\large]
\begin{axis}
[width=0.5\textwidth,
xlabel= Iteration number, 
ylabel= Training loss (n:2000),
grid style = dashed,
grid=both,
legend style=
{at={(0.35,0.58)}, 
anchor=south west, 
} ,
x label style={at={(axis description cs:0.5,-0.15)},anchor=north},
y label style={at={(axis description cs:-0.25,.5)},anchor=south}
]

\addplot[color=red,line width=2pt, mark size=3pt, mark=square*] 
plot coordinates {
(1,0.0162)
(2,0.0102)
(3,0.0098)
(4,0)
(5,0)
(6,0)
(7,0)
(8,0)
(9,0)
(10,0)
};

\addplot[color=orange, mark size=2pt,line width=3pt, mark=square*] plot coordinates {
(1,0.0672)
(2,0.0559)
(3,0.0507)
(4,0.0493)
(5,0.0496)
(6,0.0498)
(7,0.0511)
(8,0.0511)
(9,0.0566)
(10,0.0533)
};

\addplot[color=blue, dotted, mark size=2pt,line width=3pt, mark=square*] plot coordinates {
(1,0.0137)
(2,0.0146)
(3,0.0089)
(4,0)
(5,0)
(6,0)
(7,0)
(8,0)
(9,0)
(10,0)
};

\addplot[color=cyan, dotted, mark size=2pt,line width=3pt, mark=square*] plot coordinates {
	(1,0.12)
	(2,0.08)
	(3,0.062)
	(4,0.061)
	(5,0.052)
	(6,0.035)
	(7,0.021)
	(8,0.013)
	(9,0.0083)
	(10,0)
};


\legend{Trial 1 (good),Trial 1 (random), Trial 2 (good), Trial 2 (random)}

\end{axis}
\end{tikzpicture}\\
		(a) & (b)	
	\end{tabular}
	\caption{\textit{(a) Sample complexity and (b) training loss for $n=2,000$ for 2-hidden layer network (\ref{eq:fwd2}) with $d=20,k=3,k_o=2$ and ReLU activation.} } \label{fig:layer3}
\end{figure}

The alternating minimization framework can similarly be extended for learning a 2-hidden layer network (Algorithm~\ref{algo:hidden2})(with $\mathbb{P}^{l,t}_q :=  \diag(p^{l,t}_q)$).  
For experimental validation, entries of the data matrix $X\in \reals^{n\times d}$ are picked from $\gauss(0,1/n)$ distribution. A three layer network is considered (Eqn. \ref{eq:fwd2}), and is assigned weights $W^{1*}$ and $W^{2*}$ with individual elements of both matrices picked from Gaussian distribution $\normal$. For the sake of experimental analysis, the weight matrices are initialized as $[W_0^1, W_0^2] \gets [W^{1*} + E^1, W^{2*}+ E^2]$, where $\twonorm{E^1} = 0.2\twonorm{W^{1*}}$ and $\twonorm{E^2} = 0.2\twonorm{W^{2*}}$. We fix the dimension of the input layer to $d=20$, the dimension of the first hidden layer to $k=3$ and the dimension of the second hidden layer to $k_o=2$. The number of training samples was set to $n=300,600,900,1,200,1,500$ (Figure \ref{fig:layer3}(a)) for 10 trials, and $2,000$ for 2 trials (Figure \ref{fig:layer3}(b)). 

\subsection{Initialization for 1-hidden layer networks}


Zhong et. al. in \cite{zhong} develop tensor initialization method (one of the approaches for \textsc{InitializeTwoLayer} subroutine in Algorithm \ref{algo:hidden1}) for one-hidden layer networks for ReLU and other activations. The basis of their scheme is to obtain an orthonormal set of vectors $W^0:=\{w_1^0,w_2^0 \dots w_k^0\}$, that have the same span as the true weight vectors $W^*:=\{w_1^*,w_2^* \dots w_k^*\}$. 

To do this, they evaluate an empirical estimator $\hat{P} = \sum_{i=1}^m y_i(x_i x_i^\top - \mathbf{I})$, such that $P = \expec{\hat{P}} = \sum_{q=1}^K \rbrak{\gamma_2(w_q^*) - \gamma_0(w_q^*)}w_q^*{w_{q}^*}^\top$ (note that in this paper, we assume that the second layer consists of weights $v_q^* = 1$, for $q=\{1,2,\dots k\}$). To estimate the initial span $\{w_1^0,w_2^0 \dots w_k^0\}$, the authors contruct two matrices $\hat{P_1} = C\mathbf{I}+\hat{P}$ and $\hat{P_2} = C\mathbf{I} - \hat{P}$ (where $C > 2\twonorm{P}$) and evaluate the top-k eigenvectors and corresponding eigenvalues (in terms of absolute value), each, of $\hat{P_1}$ and $\hat{P_2}$ respectively. They merge the top $k_1$ and $k_2$ eigenvalues of $\hat{P_1}$ and $\hat{P_2}$ respectively, such that $k_1+k_2 = k$. This is done by ordering all eigenvectors, of $\hat{P_1}$ and $\hat{P_2}$ in descending order and picking eigenvectors corresponding to the top-k eigenvalues from the combined pool of eigenvalues. This is followed by an orthogonalization step, such that all $k_1+k_2$ eigenvectors extracted are orthogonal to each other. If the singular values of $W^*$ are $\sigma_1 > \dots > \sigma_k$, then, condition number is $\kappa = \sigma_1/\sigma_k$. 
Then, the authors claim that the procedure described above gives a good initialization:
\begin{align*}
\frob{W^0 - W^*} \leq \epsilon_0 \cdot \text{poly} (k,\kappa)\frob{W^*},
\end{align*}
with high probability if number of samples $n > \epsilon_0^{-2} d\cdot \text{poly} (k,\kappa,\log d)$ (refer Theorem 5.6 of \cite{zhong}).

\section*{Acknowledgements}
This work was supported in part by NSF grants CCF-1566281, CAREER CCF-1750920, CCF-1815101, and a faculty fellowship from the Black and Veatch Foundation. The authors would like to thank Jason Lee for spotting an error in a previous version of our manuscript, and Praneeth Narayanamurthy, Ameya Joshi and Thanh Nguyen for useful discussion and feedback.

\bibliographystyle{abbrvnat}
\bibliography{biblio_nips_relu}

\end{document}